\providecommand{\algorithmname}{Algorithm}
\theoremstyle{plain}
\newtheorem{thm}{\protect\theoremname}
  \theoremstyle{plain}
  \newtheorem{lem}[thm]{\protect\lemmaname}
  \theoremstyle{plain}
  \newtheorem{cor}[thm]{\protect\corollaryname}
\author{
SueYeon Chung\\ Harvard University \\
 \texttt{schung@fas.harvard.edu} \\
\And
  Uri Cohen \\ Hebrew Universy of Jerusalem\\
  \texttt{uri.cohen@alice.nc.huji.ac.il} \\
\AND
  Haim Sompolinsky \\ Harvard University \\ 
  Hebrew University of Jerusalem \\
  \texttt{haim@fiz.huji.ac.il} \\
\And
  Daniel D. Lee \\
  University of Pensylvania \\ 
  Facebook AI Research \\
  \texttt{ddlee@seas.upenn.edu} \\
}
  \providecommand{\corollaryname}{Corollary}
  \providecommand{\lemmaname}{Lemma}
\providecommand{\theoremname}{Theorem}
\begin{document}

\title{Learning Data Manifolds \\
with a Cutting Plane Method}
\maketitle
\begin{abstract}
\textcolor{black}{We consider the problem of classifying data manifolds
where each manifold represents invariances that are parameterized
by continuous degrees of freedom. Conventional data augmentation methods
rely upon sampling large numbers of training examples from these manifolds;
instead, we propose an iterative algorithm called $M_{CP}$ based
upon a cutting-plane approach that efficiently solves a quadratic
semi-infinite programming problem to find the maximum margin solution.
We provide a proof of convergence as well as a polynomial bound on
the number of iterations required for a desired tolerance in the objective
function. The efficiency and performance of $M_{CP}$ are demonstrated
in high-dimensional simulations and on image manifolds generated from
the ImageNet dataset. Our results indicate that $M_{CP}$ is able
to rapidly learn good classifiers and shows superior generalization
performance compared with conventional maximum margin methods using
data augmentation methods.}
\end{abstract}
\textcolor{black}{}

\section{\textcolor{black}{Introduction }}

\textcolor{black}{Handling object variability is a major challenge
for machine learning systems. For example, in visual recognition tasks,
changes in pose, lighting, identity or background can result in large
variability in the appearance of objects \cite{hinton1997modeling}.
Techniques to deal with this variability has been the focus of much
recent work, especially with convolutional neural networks consisting
of many layers. The manifold hypothesis states that natural data variability
can be modeled as lower-dimensional manifolds embedded in higher dimensional
feature representations \cite{bengio2013representation}. A deep neural
network can then be understood as disentangling or flattening the
data manifolds so that they can be more easily read out in the final
layer \cite{brahma2016deep}. Manifold representations of stimuli
have also been utilized in neuroscience, where different brain areas
are believed to untangle and reformat their representations \cite{riesenhuber1999hierarchical,serre2005object,hung2005fast,dicarlo2007untangling,pagan2013signals}.}

\textcolor{black}{This paper addresses the problem of classifying
data manifolds that contain invariances with a number of continuous
degress of freedom. These invariances may be modeled using prior knowledge,
manifold learning algorithms \cite{tenenbaum1998mapping,roweis2000nonlinear,tenenbaum2000global,belkin2003laplacian,belkin2006manifold,canas2012learning}
or as generative neural networks via adversarial training \cite{goodfellow2014generative}.
Based upon knowledge of these structures, other work has considered
building group-theoretic invariant representations \cite{anselmi2013unsupervised}
or constructing invariant metrics \cite{simard1994memory}. On the
other hand, most approaches today rely upon data augmentation by explicitly
generating ``virtual'' examples from these manifolds \cite{niyogi1998incorporating}\cite{scholkopf1996incorporating}.
Unfortunately, the number of samples needed to successfully learn
the underlying manifolds may increase the original training set by
more than a thousand-fold \cite{krizhevsky2012imagenet}.}
\begin{figure}[h]
\noindent \begin{centering}
\textcolor{black}{\includegraphics[width=0.3\columnwidth]{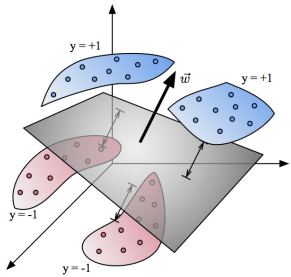}}
\par\end{centering}
\textcolor{black}{\caption{The maximum margin binary classification problem for a set of manifolds.
The optimal linear hyperplane is parameterized by the weight vector
$\vec{w}$ which separates positively labeled manifolds from negatively
labeled manifolds. Conventional data augmentation techniques resort
to sampling a large number of points from each manifold to train a
classifier. \label{fig:ManifoldClassification} }
}
\end{figure}

We propose a new method, called the Manifold Cutting Plane algorithm
or $M_{CP}$, that uses knowledge of the manifolds to efficiently
learn a maximum margin classifier. Figure \ref{fig:ManifoldClassification}
illustrates the problem in its simplest form, binary classification
of manifolds with a linear hyperplane with extensions to this basic
model discussed later. Given a number of manifolds embedded in a feature
space, the $M_{CP}$ algorithm learns a weight vector $\vec{w}$ that
separates positively labeled manifolds from negatively labeled manifolds
with the maximum margin. Although the manifolds consist of uncountable
sets of points, the $M_{CP}$ algorithm is able to find a good solution
in a provably finite number of iterations and training examples.

Support vector machines (SVM) can learn a maximum margin classifier
given a finite set of training examples \textcolor{black}{\cite{vapnik1998statistical}};
however, with conventional data augmentation methods, the number of
training examples increase exponentially rendering the standard SVM
algorithm intractable. Methods such as shrinkage and chunking to reduce
the complexity of SVM have been studied before in the context of dealing
with large-scale datasets \cite{smola1998learning}, but the resultant
kernel matrix may still be very large. Other methods which subsample
the kernel matrix\textcolor{black}{{} \cite{lee2001rsvm}} or reduce
the number of training samples \textcolor{black}{\cite{wang2005training}},\cite{smola1998learning}
may result in suboptimal solutions that do not generalize well. 

Our $M_{CP}$ algorithm directly handles the uncountable set of points
in the manifolds by solving a quadratic semi-infinite programming
problem (QSIP). $M_{CP}$ is based upon a cutting-plane method which
iteratively refines a finite set of training examples to solve the
underlying QSIP  \cite{fang2001solving,kortanek1993central,liu2004new}.
The cutting-plane method was also previously shown to efficiently
handle learning problems with a finite number of examples but an exponentially
large number of constraints \textcolor{black}{\cite{joachims2006training}}.
We provide a novel analysis of the convergence of $M_{CP}$ with both
hard and soft margins. When the problem is realizable, the convergence
bound explicitly depends upon the margin value whereas with a soft
margin and slack variables, the bound depends linearly on the number
of manifolds.

\textcolor{black}{The paper is organized as follows. We first consider
the hard margin problem and analyze the simplest form of the $M_{CP}$
algorithm. Next, we introduce slack variables in $M_{CP}$, one for
each manifold, and analyze its convergence with additional auxiliary
variables. We then demonstrate the application of $M_{CP}$ to both
high-dimensional synthetic data manifolds and to feature representations
of images undergoing a variety of warpings. We compare its performance,
both in efficiency and generalization error, with conventional SVMs
using data augmentation techniques. Finally, we discuss some natural
extensions and potential future work on $M_{CP}$ and its applications.}

\section{\textcolor{black}{Manifolds Cutting Plane Algorithm with Hard Margin
\label{sec:M4-simple}}}

\textcolor{black}{In this section, we first consider the problem of
classifying a set of manifolds when they are linearly separable. This
allows us to introduce the simplest version of the $M_{CP}$ algorithm
along with the appropriate definitions and QSIP formulation. We analyze
the convergence of the simple algorithm and prove an upper bound on
the number of errors the algorithm can make in this setting.}

\subsection{\textcolor{black}{Hard Margin QSIP \label{subsec:simpeM4-optprob}}}

\textcolor{black}{Formally, we are given a set of $P$ manifolds $M_{p}\subset\mathbb{R}^{N}$,
$p=1,\ldots,P$ with binary labels $y_{p}=\pm1$ (all points in the
same manifold share the same label). Each manifold $M_{p}$ is defined
by $\vec{x}=M_{p}(\vec{s}$) where $\vec{s}\in S_{p}$, $S_{p}$ is
a compact, convex subset of $\mathbb{R}^{D}$ representing the parameterization
of the invariances and $M_{p}(\vec{s}):\mathbb{R}^{D}\rightarrow\mathbb{R}^{N}$
is a continuous function of $\vec{s}\in S_{p}$ so that the manifolds
are bounded: $\forall\vec{s},\,\left\Vert M_{p}(\vec{s})\right\Vert <L$
by some $L$. We would like to solve the following semi-infinite quadratic
programming problem for the weight vector $\vec{w}\in\mathbb{R}^{N}$
: }

\begin{equation}
SVM^{simple}:\underset{\vec{w}}{\text{argmin}}\frac{1}{2}\left\Vert \vec{w}\right\Vert ^{2}\,s.t.\;\forall p,\forall\vec{x}\in M_{p}:\;y_{p}\left\langle \vec{w},\vec{x}\right\rangle \ge1\label{eq:SVMopt}
\end{equation}

\textcolor{black}{This is the primal formulation of the problem, where
maximizing the margin $\kappa=\frac{1}{||\vec{w}||}$ is equivalent
to minimizing the squared norm $\frac{1}{2}||\vec{w}||^{2}.$ We denote
the maximum margin attainable by $\kappa^{*}$, and the optimal solution
as $\vec{w}^{*}$. For simplicity, we do not explicitly include the
bias term here. A non-zero bias can be modeled by adding an additional
feature of constant value as a component to all the $\vec{x}$. Note
that the dual formulation of this QSIP is more complicated, involving
optimization of non-negative measures over the manifolds. In order
to solve the hard margin QSIP, we propose the following simple $M_{CP}$
algorithm. }

\subsection{\textcolor{black}{$M_{CP}^{simple}$ Algorithm}}

\textcolor{black}{The $M_{CP}^{simple}$ algorithm is an iterative
algorithm to find the optimal $\vec{w}$ in (\ref{eq:SVMopt}), based
upon a cutting plane method for solving the QSIP. The general idea
behind $M_{CP}^{simple}$ is to start with a finite number of training
examples, find the maximum margin solution for that training set,
augment the training set by finding a point on the manifolds that
violates the constraints, and iterating this process until a tolerance
criterion is reached.}

\textcolor{black}{At each stage $k$ of the algorithm there is a finite
set of training points and associated labels. The training set at
the $k$-th iteration is denoted by the set: $T_{k}=\left\{ \left(\vec{x}^{i}\in M_{p_{i}},y_{p_{i}}\right)\right\} $
with $i=1,\ldots,\left|T_{k}\right|$ examples. For the $i$-th pattern
in $T_{k}$, $p_{i}$ is the index of the manifold, and $y_{p_{i}}$
is its associated label.}

\textcolor{black}{On this set of examples, we solve the following
finite quadratic programming problem:}

\begin{equation}
SVM_{T_{k}}:\underset{\vec{w}}{\text{argmin}}\frac{1}{2}\left\Vert \vec{w}\right\Vert ^{2}\ s.t.\forall\vec{x}^{i}\in T_{k}:\;y_{p_{i}}\left\langle \vec{w},\vec{x}^{i}\right\rangle \ge1\label{eq:SVMTk}
\end{equation}

\textcolor{black}{to obtain the optimal weights $\vec{w}^{(k)}$ on
the training set $T_{k}$. We then find a constraint-violating point
$\vec{x}_{k+1}\in M_{p_{k+1}}$ from one of the manifolds such that
\begin{equation}
\left\{ p_{k+1},\,\vec{x}_{k+1}\in M_{p_{k+1}}\right\} :\quad y_{p+1}\left\langle \vec{w}^{(k)},\vec{x}_{p_{k+1}}\right\rangle <1-\text{\ensuremath{\delta}}\label{eq:violation_point-1}
\end{equation}
}

\textcolor{black}{with a required tolerance $\delta>0.$ If there
is no such point, the $M_{CP}^{simple}$ algorithm terminates. If
such a point exists, it is added to the training set, defining the
new set $T_{k+1}=T_{k}\cup\left\{ \left(\vec{x}_{k+1},y_{p_{k+1}}\right)\right\} $.
The algorithm then proceeds at the next iteration to solve $SVM_{T_{k+1}}$to
obtain $\vec{w}^{(k+1)}$. For $k=1$, the set $T_{1}$ is initialized
with at least one point from each manifold. The pseudocode for $M_{CP}^{simple}$
is shown in Alg. \ref{alg:M4}. }

\textcolor{black}{}
\begin{algorithm}[h]
\textcolor{black}{Input: $\delta$ (tolerance), $P$ manifolds and
labels $\left\{ M_{p},\,y_{p}=\pm1\right\} $, $p=1,...,P$.}

\textcolor{black}{1. Initialize $k=1$, and the set $T_{1}=\left\{ \left(\vec{x}^{i}\in M_{p_{i}},\,y_{p_{i}}\right)\right\} $
with at least one sample from each manifold $M_{p}$.}

\textcolor{black}{2. Solve for $\vec{w}^{(k)}$ in $SVM_{T_{k}}$:
$\min\frac{1}{2}\left\Vert \vec{w}\right\Vert ^{2}$ s.t. $y^{p_{i}}\langle\vec{w,}\vec{x}^{i}\rangle\ge1$
for all $\left(\vec{x}^{i},y^{p_{i}}\right)\in T_{k}$.}

\textcolor{black}{3. Find a point $\vec{x}_{k+1}\in M_{p_{k+1}}$
among the manifolds $\left\{ p_{k+1}=1,...,P\right\} $ with a margin
s.t. $y_{p_{k+1}}\left\langle \vec{w}^{(k)},\vec{x}_{k+1}\right\rangle <1-\text{\ensuremath{\delta}}$. }

\textcolor{black}{4. If there is no such point, then stop. Else, augment
the point set: $T_{k+1}=T_{k}\cup\left\{ \left(\vec{x}_{k+1},y_{p_{k+1}}\right)\right\} $.}

\textcolor{black}{5. $k\leftarrow k+1$ and go to 2. }

\textcolor{black}{\caption{Pseudocode for the $M_{CP}^{simple}$ algorithm.\label{alg:M4} }
}
\end{algorithm}

\textcolor{black}{In step 3 of the $M_{CP}^{simple}$ algorithm, a
point among the manifolds that violates the margin constraint needs
to be found. The use of a ``separation oracle'' is common in other
cutting plane algorithms such as those used for structural SVM's \cite{joachims2006training}
or linear mixed-integer programming \cite{marchand2002cutting}. In
our case, this requires determining the feasibility of $y_{p}\left\langle \vec{w}^{(k)},M_{p}(\vec{s})\right\rangle <1-\delta$
over the $D$-dimensional convex parameter set $\vec{s}\in S_{p}$.
When the manifold mapping is convex, feasibility can be determined
sometimes analytically or more generally by a variety of modern convex
optimization techniques. For non-convex mappings, a feasible separating
point can be found with search techniques in the $D$-dimensional
parameter set using gradient information or finite differences or
approximately via convex relaxation techniques. In our experiments
below, we provide some specific examples of how separating points
can be efficiently found.}

\subsection{\textcolor{black}{Convergence of $M_{CP}^{simple}$ }}

\textcolor{black}{The $M_{CP}^{simple}$ algorithm will converge asymptotically
to an optimal solution when it exists. Here we show that the $\vec{w}^{(k)}$
asymptotically converges to an optimal $\vec{w}^{\star}$. Denote
the change in the weight vector in the $k$-th iteration as $\Delta\vec{w}^{(k)}=\vec{w}^{(k+1)}-\vec{w}^{(k)}$.
We present a set of lemmas and theorems leading up to the bounds on
the number of iterations for convergence, and the estimation of the
objective function. More detailed proofs can be found in Supplementary
Materials (SM).}
\begin{lem}
\textcolor{black}{\label{lem:delta_projection_simple}The change in
the weights satisfies $\langle\Delta\vec{w}^{(k)},\vec{w}^{(k)}\rangle\geq0$
. }
\end{lem}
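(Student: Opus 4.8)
The plan is to exploit the fact that the training sets are nested, $T_k \subseteq T_{k+1}$, so that adding the violating point $\vec{x}_{k+1}$ only tightens the constraints. Let $C_k = \{\vec{w} : y_{p_i}\langle\vec{w},\vec{x}^i\rangle \ge 1 \ \forall (\vec{x}^i,y_{p_i})\in T_k\}$ denote the feasible set of $SVM_{T_k}$. Since every constraint defining $C_k$ also appears among the constraints defining $C_{k+1}$, we have $C_{k+1}\subseteq C_k$. Each $C_k$ is an intersection of half-spaces, hence closed and convex, and $\vec{w}^{(k)}$ is by definition the unique minimum-norm point of $C_k$, i.e. the Euclidean projection of the origin onto $C_k$. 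The key observation I would make is that $\vec{w}^{(k+1)}\in C_{k+1}\subseteq C_k$, so $\vec{w}^{(k+1)}$ is itself a feasible point of the earlier problem $SVM_{T_k}$.

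First I would give the elementary convexity argument. Because $C_k$ is convex and both $\vec{w}^{(k)}$ and $\vec{w}^{(k+1)}$ lie in $C_k$, the whole segment $\vec{w}(t) = (1-t)\,\vec{w}^{(k)} + t\,\vec{w}^{(k+1)}$, $t\in[0,1]$, stays in $C_k$. Define $f(t) = \tfrac{1}{2}\|\vec{w}(t)\|^2$. Since $\vec{w}^{(k)}$ minimizes $\tfrac12\|\cdot\|^2$ over $C_k$, and $\vec{w}(0)=\vec{w}^{(k)}$, the function $f$ attains its minimum over $[0,1]$ at $t=0$, which forces $f'(0)\ge 0$. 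Differentiating directly gives
\begin{equation}
f'(0) = \big\langle \vec{w}^{(k)},\, \vec{w}^{(k+1)} - \vec{w}^{(k)} \big\rangle = \big\langle \vec{w}^{(k)},\, \Delta\vec{w}^{(k)} \big\rangle \ge 0,
\end{equation}
which is exactly the claim.

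Equivalently, one may phrase this through the variational characterization of the projection: the first-order optimality condition for the minimum-norm point of a convex set $C_k$ states that $\langle \vec{u} - \vec{w}^{(k)},\, \vec{w}^{(k)}\rangle \ge 0$ for every $\vec{u}\in C_k$, and specializing $\vec{u}=\vec{w}^{(k+1)}$ yields the result. I do not expect a genuine obstacle here; the only point requiring care is verifying the set inclusion $C_{k+1}\subseteq C_k$ (so that $\vec{w}^{(k+1)}$ is admissible in the $k$-th problem) and noting that realizability guarantees each $C_k$ is nonempty so that $\vec{w}^{(k)}$ is well defined. The geometric content is simply that projecting the origin onto a shrinking family of convex sets can only move the projection in a direction making a nonnegative inner product with the previous projection.
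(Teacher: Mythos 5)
Your proof is correct and takes essentially the same approach as the paper: both arguments consider the segment between $\vec{w}^{(k)}$ and $\vec{w}^{(k+1)}$, note it stays feasible for $SVM_{T_k}$ (because $T_k\subseteq T_{k+1}$), and invoke the optimality of $\vec{w}^{(k)}$ — the paper by deriving a contradiction from $\langle\Delta\vec{w}^{(k)},\vec{w}^{(k)}\rangle<0$, you by the equivalent first-order condition $f'(0)\ge 0$. The only difference is presentational: you make the nesting of the feasible sets explicit, which the paper leaves implicit.
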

\begin{proof}
\textcolor{black}{Define $\vec{w}(\lambda)=\vec{w}^{(k)}+\lambda\Delta\vec{w}^{(k)}$.
Then for all $0\le\lambda\le1$, $\vec{w}(\lambda)$ satisfies the
constraints on the point set $T_{k}$: $y_{p_{i}}\left\langle \vec{w}(\lambda),\vec{x}_{i}\right\rangle \ge1$
for all $\left(\vec{x}_{i},y_{p_{i}}\right)\in T_{k}$. However, if
$\langle\Delta\vec{w}^{(k)},\vec{w}^{(k)}\rangle<0$, there exists
a $0<\lambda^{\prime}<1$ such that $\left\Vert \vec{w}(\lambda^{\prime})\right\Vert ^{2}<\left\Vert \vec{w}^{(k)}\right\Vert ^{2}$,
contradicting the fact that $\vec{w}^{(k)}$ is a solution to $SVM_{T_{k}}$.}
\end{proof}
\textcolor{black}{Next, we show that the norm $\left\Vert \vec{w}^{(k)}\right\Vert ^{2}$
must monotonically increase by a finite amount at each iteration.} 
\begin{thm}
\textcolor{black}{\label{thm:delta_finite_simple}In the $k_{th}$
iteration of $M_{CP}^{simple}$ algorithm, the increase in the norm
of $\vec{w}^{(k)}$ is lower bounded by $\left\Vert \vec{w}^{(k+1)}\right\Vert ^{2}\ge\left\Vert \vec{w}^{(k)}\right\Vert ^{2}+\frac{\delta_{k}^{2}}{L^{2}}$,
where $\delta_{k}=1-y_{p_{k+1}}\left\langle \vec{w}^{(k)},\vec{x}_{k+1}\right\rangle $
and }$\left\Vert \vec{x}_{k+1}\right\Vert \leq L$\textcolor{black}{{}
. }
\end{thm}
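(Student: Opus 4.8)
The plan is to write $\vec{w}^{(k+1)}=\vec{w}^{(k)}+\Delta\vec{w}^{(k)}$ and expand the squared norm. First I would compute
\[
\left\Vert \vec{w}^{(k+1)}\right\Vert ^{2}=\left\Vert \vec{w}^{(k)}\right\Vert ^{2}+2\langle\vec{w}^{(k)},\Delta\vec{w}^{(k)}\rangle+\left\Vert \Delta\vec{w}^{(k)}\right\Vert ^{2},
\]
and invoke Lemma \ref{lem:delta_projection_simple} to discard the cross term, since it guarantees $\langle\Delta\vec{w}^{(k)},\vec{w}^{(k)}\rangle\ge0$. This reduces the theorem to establishing the single lower bound $\left\Vert \Delta\vec{w}^{(k)}\right\Vert ^{2}\ge\delta_{k}^{2}/L^{2}$ on the size of the weight update.

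Next I would extract that bound from the two constraints that the new point $\vec{x}_{k+1}$ satisfies at consecutive iterations. By construction (step 3 of the algorithm) $\vec{w}^{(k)}$ violates the new point, so that $y_{p_{k+1}}\langle\vec{w}^{(k)},\vec{x}_{k+1}\rangle=1-\delta_{k}$ by the very definition of $\delta_{k}$; on the other hand $\vec{w}^{(k+1)}$ solves $SVM_{T_{k+1}}$, whose constraint set now contains $\vec{x}_{k+1}$, so it respects it: $y_{p_{k+1}}\langle\vec{w}^{(k+1)},\vec{x}_{k+1}\rangle\ge1$. Subtracting the first relation from the second gives $y_{p_{k+1}}\langle\Delta\vec{w}^{(k)},\vec{x}_{k+1}\rangle\ge\delta_{k}$.

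Finally, since $|y_{p_{k+1}}|=1$, applying the Cauchy--Schwarz inequality together with the boundedness assumption $\left\Vert \vec{x}_{k+1}\right\Vert \le L$ yields $\delta_{k}\le\left\Vert \Delta\vec{w}^{(k)}\right\Vert \left\Vert \vec{x}_{k+1}\right\Vert \le L\left\Vert \Delta\vec{w}^{(k)}\right\Vert$, hence $\left\Vert \Delta\vec{w}^{(k)}\right\Vert \ge\delta_{k}/L$. Squaring this and substituting back into the expansion completes the argument.

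The algebra here is routine; the step that does the real work is the appeal to Lemma \ref{lem:delta_projection_simple}. Without the nonnegativity of the cross term one could only bound $\left\Vert \vec{w}^{(k+1)}\right\Vert ^{2}$ by the Pythagorean term plus a possibly negative correction, and the clean monotone increase by $\delta_{k}^{2}/L^{2}$ would not follow. I therefore expect the main conceptual content to reside in that earlier geometric lemma (which rests on optimality of $\vec{w}^{(k)}$ for $SVM_{T_{k}}$ and feasibility of the segment between $\vec{w}^{(k)}$ and $\vec{w}^{(k+1)}$), rather than in the estimate carried out here.
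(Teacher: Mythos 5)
Your proof is correct and follows essentially the same route as the paper's: expand $\left\Vert \vec{w}^{(k+1)}\right\Vert ^{2}$, drop the cross term via Lemma \ref{lem:delta_projection_simple}, then combine the feasibility of $\vec{x}_{k+1}$ under $\vec{w}^{(k+1)}$ with its violation under $\vec{w}^{(k)}$ and apply Cauchy--Schwarz with $\left\Vert \vec{x}_{k+1}\right\Vert \le L$. Your closing remark that the real content lies in the earlier geometric lemma matches the structure of the paper's argument as well.
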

\begin{proof}
\textcolor{black}{First, note that $\delta_{k}>\delta\geq0$ , otherwise
the algorithm stops. We have: }$\left\Vert \vec{w}^{(k+1)}\right\Vert ^{2}$$=\left\Vert \vec{w}^{(k)}\right\Vert ^{2}+\left\Vert \Delta\vec{w}^{(k)}\right\Vert ^{2}+2\left\langle \vec{w}^{(k)},\Delta\vec{w}^{(k)}\right\rangle $$\geq\left\Vert \vec{w}^{(k)}\right\Vert ^{2}+\left\Vert \Delta\vec{w}^{(k)}\right\Vert ^{2}$\textcolor{black}{($\because$Lemma
\ref{lem:delta_projection_simple}). Consider the point added to set
$T_{k+1}=T_{k}\cup\left(\vec{x}_{k+1},y_{p_{k+1}}\right)$. At this
point, $y_{p_{k+1}}\left\langle \vec{w}^{(k+1)},\vec{x}_{k+1}\right\rangle \ge1$,
$y_{p_{k+1}}\left\langle w^{(k)},\vec{x}_{k+1}\right\rangle =1-\delta_{k}$,
hence $y_{p_{k+1}}\left\langle \Delta\vec{w}^{(k)},\vec{x}_{k+1}\right\rangle \ge\delta_{k}$.
Then, from the Cauchy-Schwartz inequality,
\begin{equation}
\left\Vert \Delta\vec{w}^{(k)}\right\Vert ^{2}\ge\frac{\delta_{k}^{2}}{\left\Vert \vec{x}_{k+1}\right\Vert ^{2}}>\frac{\delta_{k}^{2}}{L^{2}}>\frac{\delta^{2}}{L^{2}}\label{eq:deltaWLowerBound}
\end{equation}
}

\textcolor{black}{Since the solution $\vec{w}^{\star}$ satisfies
the constraints for $T_{k}$, $\left\Vert \vec{w}^{(k)}\right\Vert \le\frac{1}{\kappa^{*}}$.
Thus, the sequence of iterations monotonically increase norms and
are upper bounded by $\frac{1}{\kappa^{\star}}$. Due to convexity,
there is a single global optimum and the $M_{simple}^{4}$ algorithm
is guaranteed to converge. }
\end{proof}
\textcolor{black}{As a corollary, we see that this procedure is guaranteed
to find a realizable solution if it exists in a finite number of steps.}
\begin{cor}
\textcolor{black}{\label{cor:ZeroTrErrConv}The $M_{CP}^{simple}$
algorithm converges to a zero error classifier in less than $\frac{L^{2}}{\left(\kappa^{\star}\right)^{2}}$
iterations, where $\kappa^{\star}$ is the optimal margin and $L$
bounds the norm of the points on the manifolds.}
\end{cor}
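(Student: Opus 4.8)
The plan is to recognize the claim as a perceptron-style (Novikoff) mistake bound: I will count only those iterations at which the separation oracle returns a genuinely \emph{misclassified} point, i.e. a point $\vec{x}_{k+1}\in M_{p_{k+1}}$ with $y_{p_{k+1}}\langle\vec{w}^{(k)},\vec{x}_{k+1}\rangle<0$, and show there can be fewer than $L^2/(\kappa^\star)^2$ of these before the classifier attains zero error on all manifolds. The two engines of the argument are already in hand: Theorem~\ref{thm:delta_finite_simple} gives a guaranteed per-step increase of $\|\vec{w}^{(k)}\|^2$, and the feasibility of the optimal $\vec{w}^\star$ for every finite subproblem $SVM_{T_k}$ caps $\|\vec{w}^{(k)}\|$ from above.

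First I would make the per-mistake increment explicit. If iteration $k$ adds a misclassified point then $y_{p_{k+1}}\langle\vec{w}^{(k)},\vec{x}_{k+1}\rangle<0$, so the slack $\delta_k=1-y_{p_{k+1}}\langle\vec{w}^{(k)},\vec{x}_{k+1}\rangle$ satisfies $\delta_k>1$. Substituting $\delta_k^2>1$ into Theorem~\ref{thm:delta_finite_simple} yields $\|\vec{w}^{(k+1)}\|^2\ge\|\vec{w}^{(k)}\|^2+1/L^2$; every mistake therefore grows the squared norm by strictly more than $1/L^2$.

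Next I would accumulate these increments. Since $\delta_k>\delta\ge0$ holds at \emph{every} iteration, Theorem~\ref{thm:delta_finite_simple} also shows the sequence $\|\vec{w}^{(k)}\|^2$ is monotonically non-decreasing throughout, so the increments contributed by non-mistake iterations never subtract from the running total. Hence after $m$ mistakes (interleaved with any number of other iterations) one has $\|\vec{w}^{(k)}\|^2>m/L^2$. On the other hand, because $\vec{w}^\star$ satisfies the full manifold constraints it is in particular feasible for each finite $SVM_{T_k}$, and since $\vec{w}^{(k)}$ is the minimum-norm feasible point of that subproblem we get $\|\vec{w}^{(k)}\|\le\|\vec{w}^\star\|=1/\kappa^\star$ at every stage (this bound was noted in the proof of Theorem~\ref{thm:delta_finite_simple}). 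Combining the two gives $m/L^2<\|\vec{w}^{(k)}\|^2\le1/(\kappa^\star)^2$, hence the strict bound $m<L^2/(\kappa^\star)^2$. Once no wrong-sign point remains the oracle can no longer return a misclassification, so the classifier has zero error.

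The routine parts are the telescoping of the increments and the Cauchy--Schwarz step already supplied by Theorem~\ref{thm:delta_finite_simple}. The main conceptual obstacle is the bookkeeping around what is being counted: the algorithm runs with a margin tolerance $\delta$ and may add many correctly classified points lying inside the margin, which are \emph{not} mistakes and do not enjoy the $1/L^2$ increment; one must be careful to bound only the wrong-sign additions, and to verify that the upper bound $\|\vec{w}^{(k)}\|\le1/\kappa^\star$ holds at each stage rather than merely in the limit, since the accumulation is meaningless without a uniform cap on the norm.
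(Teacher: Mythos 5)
Your proposal is correct and follows essentially the same route as the paper's own proof: identify an error with $\delta_k>1$, invoke the per-iteration increment $\left\Vert \vec{w}^{(k+1)}\right\Vert ^{2}\ge\left\Vert \vec{w}^{(k)}\right\Vert ^{2}+\frac{\delta_k^{2}}{L^{2}}$ from Theorem~\ref{thm:delta_finite_simple}, and cap the squared norm by $1/(\kappa^\star)^2$ via feasibility of $\vec{w}^\star$ for every $SVM_{T_k}$. The only difference is presentational — you make explicit the bookkeeping (monotonicity across non-mistake iterations, the telescoping) that the paper leaves implicit in its two-line argument.
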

\begin{proof}
\textcolor{black}{When there is an error, we have $\delta_{k}>1$,
and $\left\Vert \vec{w}^{(k+1)}\right\Vert ^{2}\ge\left\Vert \vec{w}^{(k)}\right\Vert ^{2}+\frac{1}{L^{2}}$
(See (\ref{eq:deltaWLowerBound})). This implies the total number
of possible errors is upper bounded by $\frac{L^{2}}{\left(\kappa^{\star}\right)^{2}}$. }
\end{proof}
\textcolor{black}{With a finite tolerance $\delta>0$, we obtain a
bound on the number of iterations required for convergence:}
\begin{cor}
\textcolor{black}{\label{terminationBound}The $M_{CP}^{simple}$
algorithm for a given tolerance $\delta>0$ terminates in less than
$\frac{L^{2}}{\left(\kappa^{\star}\delta\right)^{2}}$ iterations
where $\kappa^{\star}$ is the optimal margin and $L$ bounds the
norm of the points on the manifolds.}
\end{cor}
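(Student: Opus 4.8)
The plan is to combine the per-iteration increase guaranteed by Theorem~\ref{thm:delta_finite_simple} with the upper bound on the norm of the iterates, turning the monotone growth of $\left\Vert\vec{w}^{(k)}\right\Vert^{2}$ into a hard cap on the number of steps. The key observation is that as long as the algorithm has not terminated at step~3, a $\delta$-violating point exists, so the margin deficit of the added point satisfies $\delta_{k}>\delta$. Substituting this into the conclusion of Theorem~\ref{thm:delta_finite_simple} yields a uniform per-step increment
\begin{equation}
\left\Vert\vec{w}^{(k+1)}\right\Vert^{2}\ge\left\Vert\vec{w}^{(k)}\right\Vert^{2}+\frac{\delta^{2}}{L^{2}},\nonumber
\end{equation}
independent of $k$.

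First I would telescope this inequality from the first iteration. Since $\left\Vert\vec{w}^{(1)}\right\Vert^{2}\ge0$, after $k$ iterations the squared norm has grown by at least $k\,\delta^{2}/L^{2}$, giving $\left\Vert\vec{w}^{(k+1)}\right\Vert^{2}\ge k\,\delta^{2}/L^{2}$. Next I would invoke the upper bound established in the proof of Theorem~\ref{thm:delta_finite_simple}: because the optimal separator $\vec{w}^{\star}$ is feasible for every restricted problem $SVM_{T_{k}}$, the minimum-norm solution $\vec{w}^{(k)}$ can never have larger norm than $\vec{w}^{\star}$, so $\left\Vert\vec{w}^{(k)}\right\Vert^{2}\le1/(\kappa^{\star})^{2}$ for all $k$.

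Combining the lower and upper bounds, any iteration $k$ that the algorithm actually performs must satisfy $k\,\delta^{2}/L^{2}\le1/(\kappa^{\star})^{2}$, which rearranges to $k\le L^{2}/(\kappa^{\star}\delta)^{2}$. Hence the algorithm cannot perform more than $L^{2}/(\kappa^{\star}\delta)^{2}$ iterations before step~3 fails to find a $\delta$-violating point and the algorithm terminates.

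I do not expect a genuine obstacle here; the entire argument is a one-line telescoping of the increment from Theorem~\ref{thm:delta_finite_simple} against a fixed ceiling. The only point requiring a moment of care is the direction of the inequality $\delta_{k}>\delta$ (as opposed to the error case $\delta_{k}>1$ used in Corollary~\ref{cor:ZeroTrErrConv}), since here the relevant lower bound on the increment is the smaller quantity $\delta^{2}/L^{2}$, which is precisely why the tolerance $\delta$ appears squared in the denominator of the final bound.
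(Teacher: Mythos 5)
Your proposal is correct and follows the paper's own argument exactly: each iteration increases $\left\Vert\vec{w}^{(k)}\right\Vert^{2}$ by at least $\delta^{2}/L^{2}$ (via $\delta_{k}>\delta$ in Theorem~\ref{thm:delta_finite_simple}), while feasibility of $\vec{w}^{\star}$ for every $SVM_{T_{k}}$ caps the squared norm at $1/\left(\kappa^{\star}\right)^{2}$, so telescoping gives the stated iteration bound. The paper states this in one line; your version merely spells out the telescoping step.
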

\begin{proof}
\textcolor{black}{Again, $\left\Vert \vec{w}^{k}\right\Vert ^{2}\le\left\Vert \vec{w}^{\star}\right\Vert ^{2}=\frac{1}{\left(\kappa^{\star}\right)^{2}}$
and each iteration increases the squared norm by at least $\frac{\delta^{2}}{L^{2}}$.}
\end{proof}
\textcolor{black}{We can also bracket the error in the objective function
after $M_{CP}^{simple}$ terminates:}
\begin{cor}
\textcolor{black}{\label{w_estimation}With tolerance $\delta>0$,
after $M_{CP}^{simple}$ terminates with solution $\vec{w}_{M_{CP}}$,
the optimal value $\left\Vert \vec{w}^{\star}\right\Vert $ of $SVM^{simple}$
is bracketed by: $\left\Vert \vec{w}_{M_{CP}}\right\Vert ^{2}\le\left\Vert \vec{w}^{\star}\right\Vert ^{2}\le\frac{1}{\left(1-\delta\right)^{2}}\left\Vert \vec{w}_{M_{CP}}\right\Vert ^{2}$. }
\end{cor}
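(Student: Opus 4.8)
The plan is to establish the two inequalities separately, each following from a one-line feasibility argument: the left bound comes from the fact that $SVM_{T_k}$ is a relaxation of $SVM^{simple}$, while the right bound comes from the termination condition of the algorithm. Throughout I would assume $0<\delta<1$ so that the rescaling used below is well defined (for $\delta\ge 1$ the upper bound is vacuous anyway).

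For the left inequality, I would observe that at termination $\vec{w}_{M_{CP}}$ is precisely the optimal solution $\vec{w}^{(k)}$ of the finite problem $SVM_{T_k}$ for the final training set $T_k$. Every constraint of $SVM_{T_k}$, namely $y_{p_i}\langle\vec{w},\vec{x}^i\rangle\ge 1$ for $\vec{x}^i\in T_k$, is also a constraint of the full problem $SVM^{simple}$, since the points in $T_k$ lie on the manifolds. Hence the feasible set of $SVM_{T_k}$ contains that of $SVM^{simple}$, so in particular $\vec{w}^{\star}$ is feasible for $SVM_{T_k}$. As $\vec{w}_{M_{CP}}$ minimizes $\frac12\|\vec{w}\|^2$ over the larger feasible set, its value cannot exceed that of the feasible point $\vec{w}^{\star}$, giving $\|\vec{w}_{M_{CP}}\|^2\le\|\vec{w}^{\star}\|^2$.

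For the right inequality, I would invoke the stopping criterion in step 4 of Alg.~\ref{alg:M4}: the algorithm halts only when no $\delta$-violating point can be found in step 3, i.e. $y_p\langle\vec{w}_{M_{CP}},\vec{x}\rangle\ge 1-\delta$ for every manifold $p$ and every $\vec{x}\in M_p$. Define the rescaled vector $\vec{w}'=\frac{1}{1-\delta}\vec{w}_{M_{CP}}$. Then $y_p\langle\vec{w}',\vec{x}\rangle=\frac{1}{1-\delta}\,y_p\langle\vec{w}_{M_{CP}},\vec{x}\rangle\ge 1$ for all $p$ and all $\vec{x}\in M_p$, so $\vec{w}'$ is feasible for the full QSIP $SVM^{simple}$. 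Since $\vec{w}^{\star}$ is the minimizer of that problem, $\|\vec{w}^{\star}\|^2\le\|\vec{w}'\|^2=\frac{1}{(1-\delta)^2}\|\vec{w}_{M_{CP}}\|^2$, which completes the bracket.

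Since the argument reduces to two feasibility observations, there is no serious analytic obstacle. The only points requiring care are the implicit assumption $\delta<1$, without which the factor $1/(1-\delta)$ is not positive and the rescaling breaks down, and making explicit that $\vec{w}_{M_{CP}}$ denotes the optimizer returned at termination (not an intermediate iterate), so that both feasibility claims refer to the same vector.
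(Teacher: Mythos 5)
Your proposal is correct and follows essentially the same route as the paper: the lower bound via feasibility of $\vec{w}^{\star}$ for the relaxed finite problem $SVM_{T_k}$, and the upper bound via rescaling $\vec{w}'=\frac{1}{1-\delta}\vec{w}_{M_{CP}}$ to obtain a feasible point for $SVM^{simple}$ using the termination condition. Your explicit note that $0<\delta<1$ is needed for the rescaling is a worthwhile clarification the paper leaves implicit, but it does not constitute a different argument.
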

\begin{proof}
\textcolor{black}{The lower bound on $\left\Vert \vec{w}^{\star}\right\Vert ^{2}$
is as before. Since $M_{CP}^{simple}$ has terminated, setting $\vec{w}^{\prime}=\frac{1}{(1-\delta)}\vec{w}_{M_{CP}}$
would make $\vec{w}^{\prime}$ feasible for $SVM^{simple}$, resulting
in the upper bound on $\left\Vert \vec{w}^{\star}\right\Vert ^{2}$. }
\end{proof}

\section{\textcolor{black}{$M_{CP}$ with Slack Variables \label{sec:M4-slack}}}

\textcolor{black}{In many classification problems, the manifolds may
not be linearly separable due to their dimensionality, size, or correlation
structure. In these situations, $M_{CP}^{simple}$ will not be able
to find a feasible solution. To handle these problems, the classic
approach is to introduce slack variables on each point ($SVM^{slack}$).
Unfortunately, this approach requires integrations over entire manifolds
with an appropriate measure defined by the infinite set of slack variables.
Thus, we formulate an alternative version of the QSIP with slack variables
below.}

\subsection{\textcolor{black}{QSIP with Manifold Slacks }}

In this work, we propose using only one slack variable per manifold
for classification problems with non-separable manifolds. This formulation
demands that all the points on each manifold $\vec{x}\in M_{p}$ obey
an inequality constraint with one manifold slack variable, $y_{p}\left\langle \vec{w},\vec{x}\right\rangle +\xi_{p}\ge1$.
As we see below, solving for this constraint is tractable and the
algorithm has good convergence guarantees.

However, a single slack requirement for each manifold by itself may
not be sufficient for good generalization performance. Our empirical
studies show that generalization performance can be improved if we
additionally demand that some representative points $\vec{x}_{p}\in M_{p}$
on each manifold also obey the margin constraint: $y_{p}\left\langle \vec{w},\vec{x}_{p}\right\rangle \ge1$.
In this work, we implement this intuition by specifying appropriate
center points $\vec{x}_{p}^{c}$ for each manifold $M_{p}$. This
center point could be the center of mass of the manifold, a representative
point, or an exemplar used to generate the manifolds \textcolor{black}{\cite{krizhevsky2012imagenet}}.
Additional slack variables for these constraints could potentially
be introduced; in the present work, we simply demand that these points
strictly obey the margin inequalities corresponding to their manifold
label. Formally, the QSIP optimization problem is summarized below,
where the objective function is minimized over the\textcolor{black}{{}
weight vector $\vec{w}\in\mathbb{R}^{N}$ and slack variables $\vec{\xi}\in\mathbb{R}^{P}$: }

\[
\begin{array}{c}
SVM_{manifold}^{slack}:\,\underset{\vec{w},\vec{\xi}}{\text{argmin}}F(\vec{w},\vec{\xi})=\frac{1}{2}\left\Vert \vec{w}\right\Vert ^{2}+C\sum_{p=1}^{P}\xi_{p}\\
s.t.\;\forall p,\forall\vec{x}\in M_{p}:\;y_{p}\left\langle \vec{w},\vec{x}\right\rangle +\xi_{p}\ge1\text{(manifolds)},\forall p:\ y_{p}\left\langle \vec{w},\vec{x}_{p}^{c}\right\rangle \ge1\ \text{(centers)},\xi_{p}\ge0
\end{array}
\]

\subsection{\textcolor{black}{$M_{CP}^{slack}$ Algorithm }}

With these definitions, we introduce our $M_{slack}^{4}$ algorithm
with slack variables below. 

\textcolor{black}{}
\begin{algorithm}[h]
\textcolor{black}{Input: $\delta$ (tolerance), $P$ manifolds and
labels $\left\{ M_{p},y_{p}=\pm1\right\} $, and centers $\vec{x}_{p}^{c}$}

\textcolor{black}{1. Initialize $k=1$, and the set $T_{1}=\left\{ \left(\vec{x}^{i}\in M_{p_{i}},\,y_{p_{i}}\right)\right\} $
with at least one sample from each manifold $M_{p}$.}

\textcolor{black}{2. Solve for $\vec{w}^{(k)}$,$\vec{\xi}^{(k)}$:
$\min\frac{1}{2}\left\Vert \vec{w}\right\Vert ^{2}+C\sum_{p=1}^{P}\xi_{p}$
s.t. $y_{p_{\mu}}\langle\vec{w,}\vec{x}^{\mu}\rangle+\xi_{p_{\mu}}\ge1$
for all $\left(\vec{x}^{\mu},y_{p_{\mu}}\right)\in T_{k}$ and $y_{p}\langle\vec{w,}\vec{x}_{p}^{c}\rangle\ge1$
for all$p$.}

\textcolor{black}{3. Find a point $\vec{x}^{k+1}\in M_{p_{k+1}}$
among the manifolds $\left\{ p=1,...,P\right\} $ with slack violation
larger than the tolerance $\delta$: $y_{p_{k+1}}\left\langle \vec{w}^{(k)},\vec{x}_{k+1}\right\rangle +\xi_{p_{k+1}}<1-\text{\ensuremath{\delta}}$}

\textcolor{black}{4. If there is no such point, then stop. Else, augment
the point set: $T_{k+1}=T_{k}\cup\left\{ \left(\vec{x}_{k+1},y_{p_{k+1}}\right)\right\} $.}

\textcolor{black}{5. $k\leftarrow k+1$ and go to 2. }

\textcolor{black}{\caption{Pseudocode for the $M_{CP}^{slack}$ algorithm.\label{alg:M4-slack-L1} }
}
\end{algorithm}

\textcolor{black}{The proposed $M_{CP}^{slack}$ algorithm modifies
the cutting plane approach to solve a semi-infinite, semi-definite
quadratic program. Each iteration involves a finite set: $T_{k}=\left\{ \left(\vec{x}^{i}\in M_{p_{i}},y_{p_{i}}\right)\right\} $
with $i=1,\ldots,\left|T_{k}\right|$ examples that is used to define
the following soft margin SVM:}

\[
\begin{array}{c}
SVM_{T_{k}}^{slack}:\underset{\vec{w},\vec{\xi}}{\text{argmin}}\frac{1}{2}\left\Vert \vec{w}\right\Vert ^{2}+C\sum_{p=1}^{P}\xi_{p}\\
s.t.\;\forall\left(\vec{x}^{i},y_{p_{i}}\right)\in T_{k}:\;y_{p_{i}}\left\langle \vec{w},\vec{x}^{i}\right\rangle +\xi_{p_{i}}\ge1;\forall p:\ y_{p}\left\langle \vec{w},\vec{x}_{p}^{c}\right\rangle \ge1\;(centers),\xi_{p}\ge0
\end{array}
\]

\textcolor{black}{to obtain the weights $\vec{w}^{(k)}$ and slacks
$\vec{\xi}^{(k)}$ at each iteration. We then find a point $\vec{x}_{k+1}\in M_{p_{k+1}}$
from one of the manifolds so that:
\begin{equation}
y_{p_{k+1}}\left\langle \vec{w}^{(k)},\vec{x}_{k+1}\right\rangle +\xi_{p_{k+1}}^{(k)}=1-\delta_{k}\label{eq:violation_point}
\end{equation}
where $\delta_{k}>\delta$. If there is no such a point, the $M_{CP}^{slack}$
algorithm terminates. Otherwise, the point $\vec{x}_{k+1}$ is added
as a training example to the set $T_{k+1}=T_{k}\cup\left\{ \left(\vec{x}_{k+1},y_{p_{k+1}}\right)\right\} $
and the algorithm proceeds to solve $SVM_{T_{k+1}}^{slack}$ to obtain
$\vec{w}^{(k+1)}$ and $\vec{\xi}^{(k+1)}$.}

\subsection{\textcolor{black}{Convergence of $M_{CP}^{slack}$ }}

\textcolor{black}{Here we show that the objective function $F\left(\vec{w},\vec{\xi}\right)=\frac{1}{2}\left\Vert \vec{w}\right\Vert ^{2}+C\sum_{p=1}^{P}\xi_{p}$
is guaranteed to increase by a finite amount with each iteration.
This result is similar to \cite{tsochantaridis2005large}, but here
we present statements in the primal domain over an infinite number
of examples. More detailed proofs can be found in SM.}
\begin{lem}
\textcolor{black}{\label{lem:delta_projection_slack} The change in
the weights and slacks satisfy:
\begin{equation}
\left\langle \Delta\vec{w}^{(k)},\vec{w}^{(k)}\right\rangle +C\sum_{p}\Delta\vec{\xi}_{p}^{(k)}\ge0\label{eq:delta_projection}
\end{equation}
}

where\textcolor{black}{{} $\Delta\vec{w}^{(k)}=\vec{w}^{(k+1)}-\vec{w}^{(k)}$
and $\Delta\vec{\xi}^{(k)}=\vec{\xi}^{(k+1)}-\vec{\xi}^{(k)}$.}
\end{lem}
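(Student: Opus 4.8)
The plan is to mimic the proof of Lemma~\ref{lem:delta_projection_simple}, generalizing the convexity argument from the weight-only case to the joint variable $(\vec{w},\vec{\xi})$. First I would define the interpolated point $\bigl(\vec{w}(\lambda),\vec{\xi}(\lambda)\bigr)=\bigl(\vec{w}^{(k)}+\lambda\Delta\vec{w}^{(k)},\,\vec{\xi}^{(k)}+\lambda\Delta\vec{\xi}^{(k)}\bigr)$ for $0\le\lambda\le1$. The key observation is that both $(\vec{w}^{(k)},\vec{\xi}^{(k)})$ and $(\vec{w}^{(k+1)},\vec{\xi}^{(k+1)})$ are feasible for the problem $SVM_{T_{k}}^{slack}$: the former by assumption, and the latter because $T_{k}\subset T_{k+1}$ means the constraints of $SVM_{T_{k+1}}^{slack}$ are a superset of those of $SVM_{T_{k}}^{slack}$, so its optimal solution automatically satisfies the $T_{k}$ constraints (the center constraints are identical across iterations, and the nonnegativity $\xi_p\ge0$ is preserved under convex combination).

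Next I would verify that the whole segment $(\vec{w}(\lambda),\vec{\xi}(\lambda))$ remains feasible for $SVM_{T_{k}}^{slack}$. This follows because the feasible region of that problem is convex: each manifold constraint $y_{p_i}\langle\vec{w},\vec{x}^i\rangle+\xi_{p_i}\ge1$ and each center constraint $y_p\langle\vec{w},\vec{x}_p^c\rangle\ge1$ is linear in $(\vec{w},\vec{\xi})$, and $\xi_p\ge0$ is linear as well, so a convex combination of two feasible points is feasible. Hence $F(\vec{w}(\lambda),\vec{\xi}(\lambda))$ is a valid objective value for $SVM_{T_{k}}^{slack}$ for every $\lambda\in[0,1]$.

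The final step is the contradiction argument. The objective $F(\vec{w}(\lambda),\vec{\xi}(\lambda))=\tfrac12\|\vec{w}^{(k)}+\lambda\Delta\vec{w}^{(k)}\|^2+C\sum_p(\xi_p^{(k)}+\lambda\Delta\xi_p^{(k)})$ is a convex quadratic in $\lambda$ whose derivative at $\lambda=0$ equals exactly the left-hand side of~(\ref{eq:delta_projection}), namely $\langle\Delta\vec{w}^{(k)},\vec{w}^{(k)}\rangle+C\sum_p\Delta\xi_p^{(k)}$. If this derivative were negative, then for small $\lambda>0$ we would have $F(\vec{w}(\lambda),\vec{\xi}(\lambda))<F(\vec{w}^{(k)},\vec{\xi}^{(k)})$, producing a feasible point of $SVM_{T_{k}}^{slack}$ with strictly smaller objective than the optimum $(\vec{w}^{(k)},\vec{\xi}^{(k)})$ --- a contradiction. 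Therefore the derivative is nonnegative, which is precisely~(\ref{eq:delta_projection}).

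I do not expect a serious obstacle here, since the argument is a direct convexity/optimality statement. The one point requiring care is confirming that $(\vec{w}^{(k+1)},\vec{\xi}^{(k+1)})$ is genuinely feasible for the \emph{earlier} problem $SVM_{T_{k}}^{slack}$, which hinges on the monotone nesting $T_k\subset T_{k+1}$ together with the fact that the center constraints and slack-nonnegativity constraints are shared between the two subproblems; once feasibility of the endpoint and convexity of the feasible set are established, the derivative-at-zero computation is routine.
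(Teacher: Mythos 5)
Your proposal is correct and takes essentially the same route as the paper: interpolate $\bigl(\vec{w}(\lambda),\vec{\xi}(\lambda)\bigr)$ between the two solutions, note that the segment stays feasible for $SVM_{T_{k}}^{slack}$, and contradict optimality of $\bigl(\vec{w}^{(k)},\vec{\xi}^{(k)}\bigr)$ when the first-order term $\langle\Delta\vec{w}^{(k)},\vec{w}^{(k)}\rangle+C\sum_{p}\Delta\xi_{p}^{(k)}$ is negative. The only difference is that you explicitly justify feasibility of $(\vec{w}^{(k+1)},\vec{\xi}^{(k+1)})$ for the earlier problem via $T_{k}\subset T_{k+1}$ and convexity of the constraint set, details the paper leaves implicit.
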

\begin{proof}
\textcolor{black}{Define $\vec{w}(\lambda)=\vec{w}^{(k)}+\lambda\Delta\vec{w}^{(k)}$
and $\vec{\xi}(\lambda)=\vec{\xi}^{(k)}+\lambda\Delta\vec{\xi}^{(k)}$.
Then for all $0\le\lambda\le1$, $\vec{w}(\lambda)$ and $\vec{\xi}(\lambda)$
satisfy the constraints for $SVM_{T_{k}}^{slack}$. The resulting
change in the objective function is given by:
\begin{multline}
F\left(\vec{w}(\lambda),\vec{\xi}(\lambda)\right)-F\left(\vec{w}^{(k)},\vec{\xi}^{(k)}\right)=\\
\lambda\left[\left\langle \Delta\vec{w}^{(k)},\vec{w}^{(k)}\right\rangle +C\sum_{p}\Delta\xi_{p}^{(k)}\right]+\frac{1}{2}\lambda^{2}\left\Vert \Delta\vec{w}^{(k)}\right\Vert ^{2}
\end{multline}
If (\ref{eq:delta_projection}) is not satisfied, then there is some
$0<\lambda^{\prime}<1$ such that $F\left(\vec{w}(\lambda^{\prime}),\vec{\xi}(\lambda^{\prime})\right)<F\left(\vec{w}^{(k)},\vec{\xi}^{(k)}\right)$,
which contradicts the fact that $\vec{w}^{(k)}$ and $\vec{\xi}^{(k)}$
are a solution to $SVM_{T_{k}}$.}
\end{proof}
We derive that the added point at each iteration must be a support
vector for the next weight:
\begin{lem}
\begin{singlespace}
\textcolor{black}{\label{lem:new_support_vector} In each iteration
of $M_{CP}^{slack}$ algorithm, the added point $\left(\vec{x}_{k+1},y_{p_{k+1}}\right)$
must be a support vector for the new weights and slacks, s.t. 
\begin{equation}
y_{p_{k+1}}\left\langle \vec{w}^{(k+1)},\vec{x}_{k+1}\right\rangle +\xi_{p_{k+1}}^{(k+1)}=1\label{eq:new_support}
\end{equation}
}
\end{singlespace}

\textcolor{black}{
\begin{equation}
y_{p_{k+1}}\left\langle \Delta\vec{w}^{(k)},\vec{x}_{k+1}\right\rangle +\Delta\xi_{p_{k+1}}^{(k)}=\delta_{k}\label{eq:new_support_delta}
\end{equation}
}
\end{lem}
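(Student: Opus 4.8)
The plan is to establish the support-vector identity (\ref{eq:new_support}) first, after which (\ref{eq:new_support_delta}) follows immediately by subtracting the violation equation (\ref{eq:violation_point}), $y_{p_{k+1}}\left\langle \vec{w}^{(k)},\vec{x}_{k+1}\right\rangle +\xi_{p_{k+1}}^{(k)}=1-\delta_{k}$, from it. Two preliminary observations would be recorded. First, since $T_{k}\subset T_{k+1}$ while the center constraints and the $\xi_{p}\ge 0$ constraints are identical in both problems, the feasible set of $SVM_{T_{k+1}}^{slack}$ is contained in that of $SVM_{T_{k}}^{slack}$; any candidate feasible for the larger constraint system is automatically feasible for the smaller one. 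Second, at any optimum of either problem each slack attains its minimal feasible value, $\xi_{p}=\max\bigl(0,\max_{i:\,p_{i}=p}(1-y_{p}\left\langle \vec{w},\vec{x}^{i}\right\rangle )\bigr)$, because $C>0$ lets one shave off any excess slack to strictly reduce $F$.

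I would then argue by contradiction, assuming the new constraint is strictly inactive, $y_{p_{k+1}}\left\langle \vec{w}^{(k+1)},\vec{x}_{k+1}\right\rangle +\xi_{p_{k+1}}^{(k+1)}>1$. The central step is to show that $(\vec{w}^{(k+1)},\vec{\xi}^{(k+1)})$ is then also optimal for the smaller problem $SVM_{T_{k}}^{slack}$. For this I would use a local perturbation argument: if some point feasible for $SVM_{T_{k}}^{slack}$ had strictly smaller objective, then moving an infinitesimal amount from $(\vec{w}^{(k+1)},\vec{\xi}^{(k+1)})$ toward it keeps the only extra constraint (the one attached to $\vec{x}_{k+1}$) strictly satisfied, hence stays feasible for $SVM_{T_{k+1}}^{slack}$, while convexity of $F$ makes the objective strictly decrease along that direction, contradicting optimality. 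Since $F=\tfrac{1}{2}\left\Vert \vec{w}\right\Vert ^{2}+C\sum_{p}\xi_{p}$ is strictly convex in $\vec{w}$, all optima of $SVM_{T_{k}}^{slack}$ share the same weight vector, which forces $\vec{w}^{(k+1)}=\vec{w}^{(k)}$.

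To close, I would feed $\vec{w}^{(k+1)}=\vec{w}^{(k)}$ back into the violation equation, which gives $y_{p_{k+1}}\left\langle \vec{w}^{(k)},\vec{x}_{k+1}\right\rangle =1-\delta_{k}-\xi_{p_{k+1}}^{(k)}$. Given this weight, the minimal feasible value of $\xi_{p_{k+1}}$ under the $T_{k+1}$ constraints equals $\max(\xi_{p_{k+1}}^{(k)},\xi_{p_{k+1}}^{(k)}+\delta_{k})=\xi_{p_{k+1}}^{(k)}+\delta_{k}$ because $\delta_{k}>\delta>0$; by the minimal-slack observation the optimum must take $\xi_{p_{k+1}}^{(k+1)}=\xi_{p_{k+1}}^{(k)}+\delta_{k}$, which renders the new constraint tight (equal to $1$) and contradicts the assumed strict inequality. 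This proves (\ref{eq:new_support}), and subtraction then yields (\ref{eq:new_support_delta}). The step I expect to be the main obstacle is the one pinning down the optimum: the objective is only linear, not strictly convex, in $\vec{\xi}$, so distinct optima may differ in their slacks, and the deduction works only by separating the two effects, exploiting strict convexity in $\vec{w}$ to fix $\vec{w}^{(k+1)}=\vec{w}^{(k)}$ while controlling the slack through its minimal-feasible-value characterization rather than through uniqueness.
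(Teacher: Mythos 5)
Your proof is correct, but it takes a genuinely different route from the paper's. The paper argues by contradiction along the segment joining the two successive solutions: assuming $y_{p_{k+1}}\left\langle \vec{w}^{(k+1)},\vec{x}_{k+1}\right\rangle +\xi_{p_{k+1}}^{(k+1)}=1+\epsilon$ with $\epsilon>0$, it sets $\lambda'=\frac{\delta_{k}}{\delta_{k}+\epsilon}<1$, checks that $\vec{w}(\lambda')=\vec{w}^{(k)}+\lambda'\Delta\vec{w}^{(k)}$ and $\vec{\xi}(\lambda')=\vec{\xi}^{(k)}+\lambda'\Delta\vec{\xi}^{(k)}$ are feasible for $SVM_{T_{k+1}}^{slack}$ (at $\lambda'$ the added constraint is exactly met, the remaining ones hold by linearity), and then invokes Lemma \ref{lem:delta_projection_slack} to conclude $F\left(\vec{w}(\lambda'),\vec{\xi}(\lambda')\right)<F\left(\vec{w}^{(k+1)},\vec{\xi}^{(k+1)}\right)$, contradicting optimality for $SVM_{T_{k+1}}^{slack}$. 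You never construct this interpolant and never use Lemma \ref{lem:delta_projection_slack}; instead you run the classical inactive-constraint argument (an optimum at which the added constraint is slack is also an optimum of the relaxed problem $SVM_{T_{k}}^{slack}$), then use strict convexity of $F$ in $\vec{w}$ to force $\vec{w}^{(k+1)}=\vec{w}^{(k)}$, and finally the minimal-feasible-slack characterization to force $\xi_{p_{k+1}}^{(k+1)}=\xi_{p_{k+1}}^{(k)}+\delta_{k}$, which makes the new constraint tight and yields the contradiction. As to what each approach buys: the paper's argument is shorter and reuses a lemma already in hand, while yours is self-contained and, notably, more airtight on one point. The strict inequality the paper claims actually fails in the degenerate case $\Delta\vec{w}^{(k)}=0$ with $\sum_{p}\Delta\xi_{p}^{(k)}=0$, where $F$ is constant along the segment; ruling that case out requires precisely the weight-uniqueness and minimal-slack reasoning you supply, so your route quietly closes a small rigor gap in the published proof. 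Your closing remark is also on target: since $F$ is only linear in $\vec{\xi}$, optima of these subproblems need not be unique, and separating the strictly convex $\vec{w}$-part from the slack part is exactly the right way to handle that.
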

\begin{proof}
\textcolor{black}{Suppose $y_{p_{k+1}}\left\langle \vec{w}^{(k+1)},\vec{x}_{k+1}\right\rangle +\xi_{p_{k+1}}^{(k+1)}=1+\epsilon$
for some $\epsilon>0$. Then we can choose $\lambda^{\prime}=\frac{\delta_{k}}{\delta_{k}+\epsilon}<1$
so that $\vec{w}(\lambda^{\prime})=\vec{w}^{(k)}+\lambda^{\prime}\Delta\vec{w}^{(k)}$
and $\vec{\xi}(\lambda^{\prime})=\vec{\xi}^{(k)}+\lambda^{\prime}\Delta\vec{\xi}^{(k)}$
satisfy the constraints for $SVM_{T_{k+1}}^{slack}$. But, from Lemma
\ref{lem:delta_projection_slack}, we have $F\left(\vec{w}(\lambda^{\prime}),\vec{\xi}(\lambda^{\prime})\right)<F\left(\vec{w}^{(k+1)},\vec{\xi}^{(k+1)}\right)$
which contradicts the fact that $\vec{w}^{(k+1)}$ and $\text{\ensuremath{\vec{\xi}}}^{(k+1)}$
are a solution to $SVM_{T_{k+1}}$. Thus, $\epsilon=0$ and the point
$\left(\vec{x}_{k+1},y_{p_{k+1}}\right)$ must be a support vector
for $SVM_{T_{k+1}}.$ (\ref{eq:new_support_delta}) results from subtracting
(\ref{eq:violation_point}) from (\ref{eq:new_support}).}
\end{proof}
\textcolor{black}{We also derive a bound on the following quadratic
function over nonnegative values:}
\begin{lem}
\textcolor{black}{\label{lem:quadratic_bound}Given $K>0$,$\delta>0$,
then $\forall x\ge0$}

\textcolor{black}{
\begin{equation}
\frac{1}{2}(x-\delta)^{2}+Kx\ge\min\left(\frac{1}{2}\delta^{2},\frac{1}{2}K\delta\right)
\end{equation}
}
\end{lem}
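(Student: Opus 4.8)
The plan is to treat the left-hand side as a one-dimensional convex quadratic in $x$, minimize it over the constraint $x \ge 0$, and then compare the constrained minimum value against the two candidate quantities appearing in the $\min$ on the right.

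First I would define $g(x) = \frac{1}{2}(x-\delta)^2 + Kx$ and observe that, since the coefficient of $x^2$ is positive, $g$ is strictly convex with derivative $g'(x) = (x-\delta) + K$. Setting $g'(x)=0$ yields the unconstrained minimizer $x^{*} = \delta - K$. Because the bound is only required for $x \ge 0$, the constrained minimizer is $\max(0,\,\delta - K)$, and by convexity it suffices to verify the claimed inequality at this single point: $g$ cannot fall below its constrained minimum anywhere on $x \ge 0$.

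Then I would split into two cases according to the sign of $\delta - K$. When $K \le \delta$ the minimizer is interior, $x = \delta - K \ge 0$, and a direct substitution gives $g(\delta - K) = K\delta - \frac{1}{2}K^{2}$. Factoring, $g(\delta - K) - \frac{1}{2}K\delta = \frac{1}{2}K(\delta - K) \ge 0$, so $g \ge \frac{1}{2}K\delta$; and since $K \le \delta$ forces $\frac{1}{2}K\delta \le \frac{1}{2}\delta^{2}$, the right-hand $\min$ equals $\frac{1}{2}K\delta$, which matches the bound. In the complementary case $K > \delta$ the unconstrained minimizer $\delta - K$ is negative, so $g$ is increasing on $x \ge 0$ and attains its minimum there at $g(0) = \frac{1}{2}\delta^{2}$; since $K > \delta$ gives $\frac{1}{2}\delta^{2} \le \frac{1}{2}K\delta$, the right-hand $\min$ equals $\frac{1}{2}\delta^{2}$, again matching.

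Since the two cases are exhaustive and in each one the constrained minimum of $g$ equals or exceeds the corresponding term of the $\min$, the inequality holds for all $x \ge 0$. I do not anticipate a genuine obstacle here: the convexity argument reduces the problem to a single evaluation, and the only point needing care is keeping track of which term of the $\min$ is active in each branch of the case split. The remaining algebra is routine.
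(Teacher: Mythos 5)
Your proof is correct and follows essentially the same route as the paper's: both minimize the convex quadratic over $x\ge 0$ at $x^{\star}=\left[\delta-K\right]_{+}$ and split into the cases $K\ge\delta$ (minimum $\frac{1}{2}\delta^{2}$ at $x=0$) and $K<\delta$ (minimum $K\delta-\frac{1}{2}K^{2}\ge\frac{1}{2}K\delta$). Your additional bookkeeping about which term of the $\min$ is active in each case is a harmless refinement of the same argument.
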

\begin{proof}
\textcolor{black}{The minimum value occurs when $x^{\star}=\left[\delta-K\right]_{+}$.
When $K\ge\delta$, then $x^{\star}=0$ and the minimum is $\frac{1}{2}\delta^{2}$.
When $K<\delta$, the minimum occurs at $K\left(\delta-\frac{1}{2}K\right)\ge\frac{1}{2}K\delta$.
Thus, the lower bound is the smaller of these two values.}
\end{proof}
Using the lemmas above, the lower bound on the change in the objective
function can be found: 
\begin{thm}
\textcolor{black}{\label{prop:objective_increase} In each iteration
$k$ of $M_{CP}^{slack}$ algorithm, the increase in the objective
function for $SVM_{manifold}^{slack}$ , defined as }$\Delta F^{(k)}=F\left(\vec{w}^{(k+1)},\vec{\xi}^{(k+1)}\right)-F\left(\vec{w}^{(k)},\vec{\xi}^{(k)}\right)$\textcolor{black}{,
is lower bounded by
\begin{equation}
\Delta F^{(k)}\ge\min\left(\frac{1}{8}\frac{\delta_{k}^{2}}{L^{2}},\,\frac{1}{2}C\delta_{k}\right)\label{eq:dFbound}
\end{equation}
}
\end{thm}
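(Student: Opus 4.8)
The plan is to write the one-step objective increase exactly as a linear part plus the quadratic weight part, bound the linear part from below with Lemma~\ref{lem:delta_projection_slack}, and then play the residual weight penalty $\tfrac12\|\Delta\vec{w}^{(k)}\|^2$ against the slack cost of the newly added point using Lemma~\ref{lem:new_support_vector}, finally collapsing the two competing estimates into a single $\min$ via Lemma~\ref{lem:quadratic_bound}. Concretely, expanding $\|\vec{w}^{(k+1)}\|^2=\|\vec{w}^{(k)}+\Delta\vec{w}^{(k)}\|^2$ gives the identity
\[
\Delta F^{(k)}=\Big(\langle\vec{w}^{(k)},\Delta\vec{w}^{(k)}\rangle+C\textstyle\sum_p\Delta\xi_p^{(k)}\Big)+\tfrac12\|\Delta\vec{w}^{(k)}\|^2 .
\]
By Lemma~\ref{lem:delta_projection_slack} the parenthesised term is nonnegative, so as a first, \emph{weight-dominated} estimate I immediately get $\Delta F^{(k)}\ge\tfrac12\|\Delta\vec{w}^{(k)}\|^2$. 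This alone yields the $\tfrac18\delta_k^2/L^2$ branch once it is paired with a lower bound on $\|\Delta\vec{w}^{(k)}\|$.

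For the second, \emph{slack-dominated} branch I would use the support-vector identity of Lemma~\ref{lem:new_support_vector}, namely $\delta_k=y_{p_{k+1}}\langle\Delta\vec{w}^{(k)},\vec{x}_{k+1}\rangle+\Delta\xi_{p_{k+1}}^{(k)}$. Writing $x:=\Delta\xi_{p_{k+1}}^{(k)}$ and applying Cauchy--Schwarz with $\|\vec{x}_{k+1}\|\le L$ gives $y_{p_{k+1}}\langle\Delta\vec{w}^{(k)},\vec{x}_{k+1}\rangle\le L\|\Delta\vec{w}^{(k)}\|$, hence $\|\Delta\vec{w}^{(k)}\|\ge(\delta_k-x)/L$ and therefore $\tfrac12\|\Delta\vec{w}^{(k)}\|^2\ge\tfrac{1}{2L^2}(x-\delta_k)^2$. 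Feeding the slack penalty $Cx$ back in from the linear term, the increase is bounded below by an expression of the form $\tfrac{1}{2L^2}(x-\delta_k)^2+Cx$ with $x\ge0$, and after the appropriate normalisation this matches the template of Lemma~\ref{lem:quadratic_bound} (with $\delta\mapsto\delta_k$), whose two arguments are exactly the two regimes and produce the stated $\min\!\big(\tfrac{1}{8}\delta_k^2/L^2,\ \tfrac12 C\delta_k\big)$.

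I expect the main obstacle to be honestly isolating the slack cost $Cx$ of the active manifold. Lemma~\ref{lem:delta_projection_slack} only controls the \emph{bundled} quantity $\langle\vec{w}^{(k)},\Delta\vec{w}^{(k)}\rangle+C\sum_p\Delta\xi_p^{(k)}$, and the remaining manifolds' slacks $\Delta\xi_p^{(k)}$, $p\neq p_{k+1}$, may well decrease as $\vec{w}$ re-adjusts, so the single term $C\Delta\xi_{p_{k+1}}^{(k)}$ cannot be peeled off by first-order optimality alone. The clean way to see that the increase is nonetheless at least $\tfrac12 C\delta_k$ is to note that the added point becomes a support vector (Lemma~\ref{lem:new_support_vector}) and that re-optimisation reroutes dual mass \emph{within} manifold $p_{k+1}$, subject to the per-manifold box constraint $\sum_{i:\,p_i=p}\alpha_i\le C$; this box is precisely the source of the $C\delta_k$ branch, while the diameter bound $\|\vec{x}_{k+1}-\vec{x}_j\|\le 2L$ for a competing support vector $\vec{x}_j$ is what degrades the weight-branch constant from $\tfrac12$ to $\tfrac18$. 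I would therefore carry out the decisive step through the Lagrangian dual of the per-iteration problem $SVM_{T_k}^{slack}$, where the increase becomes a one-dimensional concave maximisation in the new multiplier whose value is exactly the scalar bound packaged by Lemma~\ref{lem:quadratic_bound}; the only remaining care is the bookkeeping of the case $\xi_{p_{k+1}}^{(k)}>0$, which is disposed of by complementary slackness.
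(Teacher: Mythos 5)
Your proposal is correct in substance, but it takes a genuinely different route from the paper, and it is worth spelling out the correspondence. You correctly identify the crux: Lemma \ref{lem:delta_projection_slack} only controls the bundled quantity $\langle\Delta\vec{w}^{(k)},\vec{w}^{(k)}\rangle+C\sum_{p}\Delta\xi_{p}^{(k)}$, so the slack cost $C\Delta\xi_{p_{k+1}}^{(k)}$ of the active manifold cannot be peeled off by first-order optimality alone. You resolve this in the \emph{dual}: reroute dual mass within manifold $p_{k+1}$ subject to the per-manifold box constraint $\sum_{i:p_{i}=p}\alpha_{i}\le C$, with the box producing the $\tfrac{1}{2}C\delta_{k}$ branch and the diameter bound $\|\vec{x}_{k+1}-\vec{x}_{j}\|\le2L$ degrading the quadratic branch from $\tfrac{1}{2}$ to $\tfrac{1}{8}$. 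This is essentially the line-search argument of Tsochantaridis et al., which the paper explicitly cites as the analogue of its result while choosing to argue ``in the primal domain.'' The paper's proof instead splits on the sign of $\Delta\xi_{p_{k}}^{(k)}$ and then on whether manifold $p_{k}$ already has a support vector in $T_{k}$: when it does not, it builds a feasible interpolation path that \emph{freezes} $\xi_{p_{k}}$, which yields exactly your inequality $\Delta F^{(k)}\ge C\Delta\xi_{p_{k}}^{(k)}+\tfrac{1}{2}\|\Delta\vec{w}^{(k)}\|^{2}$ with the stronger constant $\tfrac{1}{2L^{2}}$; when it does, it perturbs the path by the minimal support-vector increase $\varepsilon$ and bounds $\|\Delta\vec{w}^{(k)}\|^{2}\ge\tfrac{1}{4L^{2}}(\delta_{k}-\varepsilon)^{2}$ via the difference of the new point and an existing support vector --- the exact primal mirror of your dual rerouting, and the same source of the $\tfrac{1}{8}$; both cases close with Lemma \ref{lem:quadratic_bound}, as in your plan. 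What each approach buys: the paper's feasible-path construction is self-contained and never invokes duality, at the cost of a two-level case analysis; your dual argument is more standard and makes the origin of the two branches of the min transparent. To make your version complete you would need to (i) justify strong duality for $SVM_{T_{k}}^{slack}$ (routine for a convex QP with affine constraints and finite optimum, even though the hard center constraints preclude a naive Slater argument), and (ii) be explicit that a pure one-dimensional increase of the new multiplier fails when the box has little room left --- the move must simultaneously decrease existing multipliers of manifold $p_{k+1}$, giving a gain of at least $t\delta_{k}-2t^{2}L^{2}$ over $t\in[0,C]$, which is precisely the ``rerouting'' you describe and is what Lemma \ref{lem:quadratic_bound} then converts into the stated bound.
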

\begin{proof}
\textcolor{black}{We first note that the change in objective function
is strictly increasing: }

\textcolor{black}{{} 
\begin{equation}
F\left(\vec{w}^{(k+1)},\vec{\xi}^{(k+1)}\right)-F\left(\vec{w}^{(k)},\vec{\xi}^{(k)}\right)=\left[\left\langle \Delta\vec{w}^{(k)},\vec{w}^{(k)}\right\rangle +C\sum_{p}\Delta\xi_{p}^{(k)}\right]+\frac{1}{2}\left\Vert \Delta\vec{w}^{(k)}\right\Vert ^{2}>0
\end{equation}
This can be seen immediately from Lemma \ref{lem:delta_projection_slack}
when $\Delta\vec{w}^{(k)}\ne0$. On the other hand, if $\Delta\vec{w}^{(k)}=0$,
we know that $\Delta\xi_{p_{k}}^{(k)}=\delta_{k}$ from Lemma \ref{lem:new_support_vector}
and $\Delta\xi_{p\ne p_{k}}^{(k)}=0$ since $\vec{\xi}^{(k)}$ is
the solution for $SVM_{T_{k}}$. So for $\Delta\vec{w}^{(k)}=0$,
$F\left(\vec{w}^{(k+1)},\vec{\xi}^{(k+1)}\right)-F\left(\vec{w}^{(k)},\vec{\xi}^{(k)}\right)=C\delta_{k}$.
To compute a general lower bound on the increase in the objective
function, we proceed as follows.}

\textcolor{black}{The added point $\vec{x}_{k}$ comes from a particular
manifold $M_{p_{k}}$. If $\Delta\xi_{p_{k}}^{(k)}\le0$, from Lemma
\ref{lem:new_support_vector} we have $y_{p_{k}}\left\langle \Delta\vec{w}^{(k)},\vec{x}_{k}\right\rangle \ge\delta_{k}$.
Then by the Cauchy-Schwarz inequality, $\left\Vert \Delta\vec{w}^{(k)}\right\Vert ^{2}\ge\frac{\delta_{k}^{2}}{L^{2}}$
which yields $F\left(\vec{w}^{(k+1)},\vec{\xi}^{(k+1)}\right)-F\left(\vec{w}^{(k)},\vec{\xi}^{(k)}\right)\ge\frac{1}{2}\frac{\delta_{k}^{2}}{L^{2}}$.}

\textcolor{black}{We next analyze $\Delta\xi_{p_{k}}^{(k)}>0$ and
consider the finite set of points $\left(\vec{x}^{\nu},\,p_{k}\right)\in T_{k}$
that come from the $p_{k}$ manifold. There must be at least one such
point in $T_{k}$ by initialization of the algorithm. Each of these
points obeys the constraints: 
\begin{align}
y_{p_{k}}\left\langle \vec{w}^{(k)},\vec{x}^{\nu}\right\rangle +\xi_{p_{k}}^{(k)} & =1+\epsilon_{\nu}^{(k)}\\
y_{p_{k}}\left\langle \vec{w}^{(k+1)},\vec{x}^{\nu}\right\rangle +\xi_{p_{k}}^{(k+1)} & =1+\epsilon_{\nu}^{(k+1)}\\
\epsilon_{\nu}^{(k)},\,\epsilon_{\nu}^{(k+1)} & \ge0
\end{align}
}

\textcolor{black}{We consider the minimum value of the thresholds:
$\eta=\min_{\nu}\epsilon_{\nu}^{(k)}$. We have two possibilities:
$\eta$ is positive so that none of the points are support vectors
for $SVM_{T_{k}}^{slack}$, or $\eta=0$ so that at least one support
vector lies in $M_{p_{k}}$.}

\textbf{\textcolor{black}{Case $\eta>0$:}}

\textcolor{black}{In this case, we define a linear set of slack variables:
\begin{eqnarray}
\xi_{p}(\lambda) & = & \begin{cases}
\xi_{p}^{(k)}+\lambda\Delta\xi_{p}^{(k)} & p\ne p_{k}\\
\xi_{p_{k}}^{(k)} & p=p_{k}
\end{cases}
\end{eqnarray}
and weights $\vec{w}(\lambda)=\vec{w}^{(k)}+\lambda\Delta\vec{w}^{(k)}$.
Then for $0\le\lambda\le\min\left(\frac{\eta_{p_{k}}}{\Delta\xi_{p_{k}}^{(k)}},1\right)$,
$\vec{w}(\lambda)$ and $\vec{\xi}(\lambda)$ will satisfy the constraints
for $SVM_{T_{k}}$. Following similar reasoning in Lemma \ref{lem:delta_projection_slack},
this implies 
\begin{equation}
\left\langle \Delta\vec{w}^{(k)},\vec{w}^{(k)}\right\rangle +C\sum_{p\ne p_{k}}\Delta\vec{\xi}_{p}^{(k)}\ge0\label{eq:nonsv_gradient_bound-1}
\end{equation}
Then in this case, we have}

\textcolor{black}{{} 
\begin{align}
 & F\left(\vec{w}^{(k+1)},\vec{\xi}^{(k+1)}\right)-F\left(\vec{w}^{(k)},\vec{\xi}^{(k)}\right)\nonumber \\
 & =\left[\left\langle \Delta\vec{w}^{(k)},\vec{w}^{(k)}\right\rangle +C\sum_{p}\Delta\xi_{p}^{(k)}\right]+\frac{1}{2}\left\Vert \Delta\vec{w}^{(k)}\right\Vert ^{2}\nonumber \\
 & \ge C\Delta\xi_{p_{k}}^{(k)}+\frac{1}{2}\left\Vert \Delta\vec{w}^{(k)}\right\Vert ^{2}\label{eq:nonsv_gradient-1}\\
 & \ge\frac{1}{2}\frac{\left(\delta_{k}-\Delta\xi_{p_{k}}^{(k)}\right)^{2}}{L^{2}}+C\Delta\xi_{p_{k}}^{(k)}\label{eq:nonsv_cauchy_schwarz-1}\\
 & \ge\min\left(\frac{1}{2L^{2}}\delta_{k}^{2},\,\frac{1}{2}C\delta_{k}\right)\label{eq:nonsv_quadratic_bound-1}
\end{align}
by applying (\ref{eq:nonsv_gradient_bound-1}) in (\ref{eq:nonsv_gradient-1}),
Lemma \ref{lem:new_support_vector} and Cauchy-Schwarz in (\ref{eq:nonsv_cauchy_schwarz-1})
and Lemma \ref{lem:quadratic_bound} in (\ref{eq:nonsv_quadratic_bound-1})
.}

\textbf{\textcolor{black}{Case $\eta=0$:}}

\textcolor{black}{In this case, we consider $\varepsilon=\min_{\epsilon_{\nu}^{(k)}=0}\epsilon_{\nu}^{(k+1)}\ge0$,
i.e. the minimal increase among the support vectors. We then define
\begin{eqnarray}
\xi_{p}(\lambda) & = & \begin{cases}
\xi_{p}^{(k)}+\lambda\Delta\xi_{p}^{(k)} & p\ne p_{k}\\
\xi_{p_{k}}^{(k)}+\lambda\left(\Delta\xi_{p}^{(k)}-\varepsilon\right) & p=p_{k}
\end{cases}
\end{eqnarray}
and weights $\vec{w}(\lambda)=\vec{w}^{(k)}+\lambda\Delta\vec{w}^{(k)}$.
There will then be a finite range of $0\le\lambda\le\lambda_{min}$
for which $\vec{\xi}(\lambda)$ and $\vec{w}(\lambda)$ satisfy the
constraints for $SVM_{T_{k}}$ so that 
\begin{align}
\left\langle \Delta\vec{w}^{(k)},\vec{w}^{(k)}\right\rangle +C\sum_{p\ne p_{k}}\Delta\vec{\xi}_{p}^{(k)}+C\left(\Delta\xi_{p_{k}}^{(k)}-\varepsilon\right) & \ge0\\
\left\langle \Delta\vec{w}^{(k)},\vec{w}^{(k)}\right\rangle +C\sum_{p}\Delta\vec{\xi}_{p}^{(k)} & \ge C\varepsilon\label{eq:sv_gradient_bound-1}
\end{align}
}

\textcolor{black}{We also have a support vector $\left(\vec{x}^{\nu},\,p_{k}\right)\in T_{k}$
so that 
\begin{align}
y_{p_{k}}\left\langle \vec{w}^{(k+1)},\vec{x}^{\nu}\right\rangle +\xi_{p_{k}}^{(k+1)} & =1+\varepsilon\\
y_{p_{k}}\left\langle \Delta\vec{w}^{(k)},\vec{x}^{\nu}\right\rangle +\Delta\xi_{p_{k}}^{(k)} & =\varepsilon
\end{align}
Using Lemma \ref{lem:new_support_vector} and Cauchy-Schwarz, we get:
\begin{align}
y_{p_{k}}\left\langle \Delta\vec{w}^{(k)},\vec{x}_{k}-\vec{x}^{\nu}\right\rangle  & =\delta_{k}-\varepsilon\\
\left\Vert \Delta\vec{w}^{(k)}\right\Vert ^{2} & \ge\frac{1}{4L^{2}}\left(\delta_{k}-\varepsilon\right)^{2}\label{eq:sv_cauchy_schwarz-1}
\end{align}
Thus, we have: 
\begin{align}
 & F\left(\vec{w}^{(k+1)},\vec{\xi}^{(k+1)}\right)-F\left(\vec{w}^{(k)},\vec{\xi}^{(k)}\right)\\
 & =\left[\left\langle \Delta\vec{w}^{(k)},\vec{w}^{(k)}\right\rangle +C\sum_{p}\Delta\xi_{p}^{(k)}\right]+\frac{1}{2}\left\Vert \Delta\vec{w}^{(k)}\right\Vert ^{2}\\
 & \ge C\varepsilon+\frac{1}{8L^{2}}\left(\delta_{k}-\varepsilon\right)^{2}\\
 & \ge\min\left(\frac{1}{8L^{2}}\delta_{k}^{2},\,\frac{1}{2}C\delta_{k}\right)\label{eq:sv_quadratic_bound-1}
\end{align}
by applying (\ref{eq:sv_gradient_bound-1}) and (\ref{eq:sv_cauchy_schwarz-1})
to obtain the final bound.} 
\end{proof}
\textcolor{black}{Since the $M_{CP}^{slack}$ algorithm is guaranteed
to increase the objective by a finite amount, it will terminate in
a finite number of iterations if we require $\delta_{k}>\delta$ for
some positive $\delta>0$.}
\begin{cor}
\textcolor{black}{\label{slack_bound}The $M_{CP}^{slack}$ algorithm
for a given $\delta>0$ will terminate after at most $P\cdot\max\left(\frac{8CL^{2}}{\delta^{2}},\,\frac{2}{\delta}\right)$
iterations where $P$ is the number of manifolds, L bounds the norm
of the points on the manifolds.}
\end{cor}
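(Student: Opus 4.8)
The plan is to convert the per-iteration progress guarantee of Theorem \ref{prop:objective_increase} into a global iteration count by pairing it with a uniform upper bound on the objective value. The two ingredients I need are a step-independent lower bound on the increase $\Delta F^{(k)}$ and a ceiling on how large $F$ can ever grow along the iterates.

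For the first ingredient, I would observe that the algorithm only proceeds past step 3 when the added point violates the margin by more than the tolerance, i.e. $\delta_k > \delta$. Both arguments of the minimum in (\ref{eq:dFbound}), namely $\frac{1}{8}\frac{\delta_k^2}{L^2}$ and $\frac{1}{2}C\delta_k$, are monotonically increasing in $\delta_k$, so substituting $\delta_k > \delta$ gives the uniform bound $\Delta F^{(k)} \ge \min\left(\frac{\delta^2}{8L^2},\, \frac{C\delta}{2}\right)$ at every non-terminating iteration.

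For the second ingredient, I would use that each iterate $(\vec w^{(k)}, \vec\xi^{(k)})$ solves $SVM_{T_k}^{slack}$, which is a relaxation of the full $SVM_{manifold}^{slack}$ obtained by retaining only finitely many manifold constraints; hence $F(\vec w^{(k)},\vec\xi^{(k)}) \le F^{\star}$, the full optimum. It then remains to bound $F^{\star} \le CP$ by exhibiting a feasible point of objective $CP$: the natural candidate is $\vec w = 0$ with $\xi_p = 1$ for all $p$, which satisfies every manifold inequality with equality and yields $F = C\sum_p \xi_p = CP$.

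Since $F \ge 0$ and rises by at least $\min\left(\frac{\delta^2}{8L^2},\, \frac{C\delta}{2}\right)$ per iteration while never exceeding $CP$, the number of iterations $K$ must satisfy $K\cdot\min\left(\frac{\delta^2}{8L^2},\, \frac{C\delta}{2}\right) \le CP$, i.e. $K \le CP\cdot\max\left(\frac{8L^2}{\delta^2},\, \frac{2}{C\delta}\right) = P\max\left(\frac{8CL^2}{\delta^2},\, \frac{2}{\delta}\right)$, which is exactly the claimed bound. The main obstacle I anticipate is the clean ceiling $F^{\star}\le CP$: the candidate $\vec w = 0$ violates the center constraints $y_p\langle \vec w, \vec x_p^c\rangle \ge 1$, so a fully rigorous argument must either treat the centers as a negligible additive correction or replace this candidate by the minimum-norm center separator $\vec w_c$ together with slacks $\xi_p \le 1 + \|\vec w_c\|L$, at the cost of extra terms. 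Confirming that the stated $CP$ bound survives this refinement is the delicate step, and I expect the supplementary material to handle it carefully.
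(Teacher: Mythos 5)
Your proposal is, in outline, exactly the paper's own argument: combine the uniform per-iteration increase $\min\left(\frac{\delta^{2}}{8L^{2}},\frac{C\delta}{2}\right)$ obtained from Theorem \ref{prop:objective_increase} together with $\delta_{k}>\delta$, with the ceiling $F^{\star}\le PC$ certified by the candidate $\vec{w}=0$, $\xi_{p}=1$, and divide one by the other. The relaxation step you use to justify $F\left(\vec{w}^{(k)},\vec{\xi}^{(k)}\right)\le F^{\star}$ is also how the paper argues (it appears explicitly in Corollary \ref{slack_estimation}).

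The one substantive point of difference is the caveat you raise at the end, and you are right to raise it: the center constraints $y_{p}\left\langle \vec{w},\vec{x}_{p}^{c}\right\rangle \ge1$ carry no slack, so $\vec{w}=0$, $\xi_{p}=1$ is \emph{not} feasible for $SVM_{manifold}^{slack}$ as formulated. The paper's proof does not address this; it simply asserts feasibility of that candidate and stops, so the supplementary material will not close this for you --- the gap you flagged sits in the published proof, not in your proposal. Your suggested repair is the natural one: let $\vec{w}_{c}$ be the minimum-norm separator of the $P$ centers (which must exist for step 2 of the algorithm to be solvable at all), and take $\xi_{p}=\left[1-\min_{\vec{x}\in M_{p}}y_{p}\left\langle \vec{w}_{c},\vec{x}\right\rangle \right]_{+}\le1+L\left\Vert \vec{w}_{c}\right\Vert $, giving the ceiling $\frac{1}{2}\left\Vert \vec{w}_{c}\right\Vert ^{2}+PC\left(1+L\left\Vert \vec{w}_{c}\right\Vert \right)$. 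This preserves finite termination and the structure of the bound, but the iteration count then depends on $\left\Vert \vec{w}_{c}\right\Vert $, so the clean constant $P\cdot\max\left(\frac{8CL^{2}}{\delta^{2}},\frac{2}{\delta}\right)$ holds verbatim only if the center constraints are dropped from (or softened in) the formulation. In short, your argument reproduces the paper's proof, and where it is more cautious than the paper, the caution is warranted.
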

\begin{proof}
\textcolor{black}{$\vec{w}=0$ and $\xi_{p}=1$ is a feasible solution
for $SVM_{manifold}^{slack}$. Therefore, the optimal objective function
is upper-bounded by $F\left(\vec{w}=0,\vec{\xi}=1\right)=PC$. The
upper bound on the number of iterations is then provided by Theorem
(\ref{prop:objective_increase}). }
\end{proof}
\textcolor{black}{We can also bound the error in the objective function
after $M_{CP}^{slack}$ terminates:}
\begin{cor}
\textcolor{black}{\label{slack_estimation}With $\delta>0$, after
$M_{CP}^{slack}$ terminates with solution $\vec{w}_{M_{CP}}$, slack
$\vec{\xi}_{M_{CP}}$ and value $F_{M_{CP}}=F\left(\vec{w}_{M_{CP}},\vec{\xi}_{M_{CP}}\right)$,
then the optimal value $F^{\star}$ of $SVM_{manifold}^{slack}$ is
bracketed by:
\begin{equation}
F_{M_{CP}}\le F^{\star}\le F_{M_{CP}}+PC\delta.
\end{equation}
}
\end{cor}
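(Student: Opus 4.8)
The plan is to establish the two inequalities of the bracket separately, mirroring the bracketing argument of Corollary~\ref{w_estimation} but adapting it to the slack formulation. The lower bound will follow from the general cutting-plane (relaxation) principle, while the upper bound requires exhibiting an explicit feasible point for the full problem $SVM_{manifold}^{slack}$ whose objective exceeds $F_{M_{CP}}$ by no more than $PC\delta$.

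For the lower bound $F_{M_{CP}}\le F^{\star}$, I would observe that at termination the pair $(\vec{w}_{M_{CP}},\vec{\xi}_{M_{CP}})$ solves the finite relaxation $SVM_{T_{k}}^{slack}$ for the final training set $T_{k}$. Since $T_{k}$ is a finite subset of the union of the manifolds, the constraint set of $SVM_{T_{k}}^{slack}$ is a relaxation of that of $SVM_{manifold}^{slack}$: every $(\vec{w},\vec{\xi})$ feasible for the full semi-infinite problem is a fortiori feasible for the finite one, because the center constraints and the nonnegativity constraints $\xi_{p}\ge0$ are identical in both, and the manifold constraints of $SVM_{T_{k}}^{slack}$ are just a finite subcollection of those of $SVM_{manifold}^{slack}$. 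Hence the full problem minimizes the same objective over a smaller feasible set, so its optimal value cannot be smaller, giving $F^{\star}\ge F_{M_{CP}}$.

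For the upper bound, I would use the termination condition. Writing $\xi_{p}$ for the components of $\vec{\xi}_{M_{CP}}$, when $M_{CP}^{slack}$ stops at step 4 no point on any manifold violates the tolerance, i.e. $y_{p}\left\langle \vec{w}_{M_{CP}},\vec{x}\right\rangle +\xi_{p}\ge1-\delta$ for all $p$ and all $\vec{x}\in M_{p}$. Define shifted slacks $\xi_{p}^{\prime}=\xi_{p}+\delta$; these remain nonnegative, and adding $\delta$ to each of the above inequalities shows that $(\vec{w}_{M_{CP}},\vec{\xi}^{\prime})$ satisfies the full manifold constraints $y_{p}\left\langle \vec{w}_{M_{CP}},\vec{x}\right\rangle +\xi_{p}^{\prime}\ge1$. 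The center constraints $y_{p}\left\langle \vec{w}_{M_{CP}},\vec{x}_{p}^{c}\right\rangle \ge1$ already hold, since they were enforced in every $SVM_{T_{k}}^{slack}$. Thus $(\vec{w}_{M_{CP}},\vec{\xi}^{\prime})$ is feasible for $SVM_{manifold}^{slack}$, and its objective is $F(\vec{w}_{M_{CP}},\vec{\xi}^{\prime})=\frac{1}{2}\left\Vert \vec{w}_{M_{CP}}\right\Vert ^{2}+C\sum_{p}(\xi_{p}+\delta)=F_{M_{CP}}+PC\delta$. Minimality of $F^{\star}$ then yields $F^{\star}\le F_{M_{CP}}+PC\delta$, completing the bracket.

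I expect no serious obstacle here, as the argument is essentially bookkeeping once the right feasible point is identified. The only point requiring care is recognizing that feasibility for the full problem is restored by shifting all slacks uniformly by $\delta$ (rather than rescaling $\vec{w}$ as in the hard-margin Corollary~\ref{w_estimation}), and then checking that this shift preserves nonnegativity and leaves the center constraints untouched, so that the added cost is exactly $PC\delta$, the penalty of the $P$ shifted slacks.
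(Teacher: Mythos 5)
Your proposal is correct and follows exactly the paper's own argument: the lower bound via the observation that the finite problem $SVM_{T_{k}}^{slack}$ is a relaxation of $SVM_{manifold}^{slack}$, and the upper bound by shifting every slack by $\delta$ (i.e.\ $\xi_{p}\mapsto\xi_{M_{CP},p}+\delta$) to produce a feasible point for the full problem whose objective is $F_{M_{CP}}+PC\delta$. The only difference is that you spell out the bookkeeping (nonnegativity, center constraints, the termination condition) that the paper leaves implicit.
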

\begin{proof}
\textcolor{black}{The lower bound on $F^{\star}$ is apparent since
$SVM_{manifold}^{slack}$ includes $SVM_{T_{k}}^{slack}$ constraints
for all $k$. Setting the slacks $\xi_{p}=\xi_{M_{CP},p}+\delta$
will make the solution feasible for $SVM_{manifold}^{slack}$ resulting
in the upper bound. }
\end{proof}

\section{\textcolor{black}{Experiments \label{sec:Experiments}}}

\subsection{$L_{q}$ Ellipsoidal\textcolor{black}{{} Manifolds }}

\textcolor{black}{As an illustration of our method, we have generated
$D$-dimensional $L_{q}$-norm ellipsoids with random radii, centers,
and directions. The points on each manifold $M_{p}$ are parameterized
as $M_{p}=\left\{ \vec{x}\left|\vec{x}=\vec{x}_{p}^{c}+\sum_{i=1}^{D}R_{i}s_{i}\vec{u}_{i}^{p}\in\mathbb{R}^{N}\right.\right\} $
where the center $\vec{x}_{p}^{c}$ and basis vectors $\vec{u}_{i}^{p}$
are random Gaussian and $R_{i}$, the ellipsoidal radii, are sampled
from $\text{Unif}[0.5R_{0},1.5R_{0}]$ with mean $R_{0}$. }

We compare the performance of $M_{CP}$ to the conventional Point
SVM ($SVM^{simple}$, $SVM^{slack}$) with samples drawn from the
surface of the $L_{q}$ ellipsoids\textcolor{black}{{} for training
and test examples.} Performance is measured by generalization error
on the task of classifying points from positively labeled manifolds
from negatively labeled ones, as a function of the number of training
samples used during learning. 

\textcolor{black}{For these manifolds, the separation oracle of $M_{CP}$
returns points that minimize $y^{p}\left[\vec{w}\cdot\left(\vec{x}_{0}^{p}+\sum_{i=1}^{D}R_{i}s_{i}\vec{u}_{i}^{p}\right)\right]$
over the set $\left\Vert \vec{s}\right\Vert _{q}\le1$. For norms
with $q\geq1$, the solution can be expressed as $s_{i}=-\frac{\left(h_{i}^{p}\right)^{1/(q-1)}}{\left\{ \sum\left(h_{i}^{p}\right)^{q/(q-1)}\right\} ^{1/q}}$
where $h_{i}^{p}=y^{p}\vec{w}\cdot\vec{u_{i}}^{p}$. For $M_{CP}^{slack}$,
we used an additional single margin constraint per manifold given
by the center of the ellipsoids $\vec{x}_{p}^{c}$. For the parameters
shown in Fig. }\ref{fig:L2-balls-GenErr}\textcolor{black}{, we use
$N=500$ to test $M_{CP}^{simple}$ and $N=490$ for the $M_{CP}^{slack}$
simulations which are below and above the critical capacity. }The
results illustrate that $M_{CP}$ achieves low generalization error
very efficiently compared to a conventional maximum margin classifier
using sampled training examples. 

\textcolor{black}{}
\begin{figure}[h]
\noindent \begin{centering}
\textcolor{black}{\includegraphics[width=0.55\columnwidth]{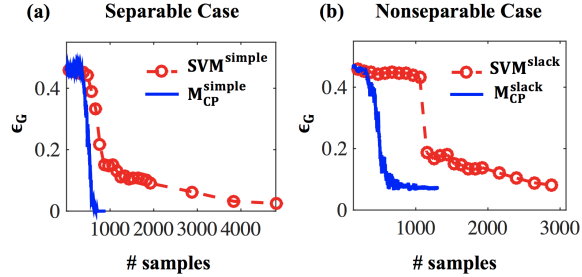}}
\par\end{centering}
\textcolor{black}{\caption{\textbf{Generalization error ($\epsilon_{G}$) of the $M_{CP}$ solution
for $L_{50}$ ellipsoids}, shown as a function of the total number
of training samples (blue solid) compared with conventional Point
SVM (red dashed). $P=48$, $D=10$, $R_{0}=20$ (mean elliptic radii)
and (a) $P/N=0.096$ just below the critical capacity is used for
$M_{CP}^{simple}$and (b) $P/N=0.098$ for $M_{CP}^{slack}$ with
slack coefficient $C_{opt}=100$. $\epsilon_{G}$ is computed from
$500$ points per manifold. \label{fig:L2-balls-GenErr}}
}
\end{figure}

\subsection{\textcolor{black}{ImageNet Object Manifolds}}

We also apply the $M_{CP}$ algorithm to a more realistic class of
object manifolds. Here each object manifold is constructed from a
set of $6-D$ affine warpings of images from the ImageNet data-set
\cite{deng2009imagenet}. The resulting $P$ manifolds are split into
dichotomies, with $\frac{1}{2}P$ manifolds considered as positive
instances and the remaining $\frac{1}{2}P$ manifolds as negative
instances. $M_{CP}$ is then used to learn a binary classifier for
the manifolds. We note that $M_{CP}$ can easily be extended to multi-class
problems but was not used in this experiment.

We compared the performace of $M_{CP}$ to conventional SVM on sampled
points from two different feature representations of the images: in
the original pixel space and in a V1-like representation of the same
images created by applying full-wave rectification after filtering
by arrays of Gabor functions \cite{serre2007robust}. We trained on
$P=4$ object manifolds, and obtained the parameter $C$ through cross-validation.
At each iteration of $M_{CP}$, a constraint-violating point on the
manifolds is found using local search among $K=5$ neighboring samples
in the affine warping space. Fig. \ref{fig:image-based-object-manifolds}
shows how quickly $M_{CP}$ converges to low generalization error
on both the Gabor features and pixel representations. We note that
the maximal number $P$ of manifolds that are separable depends upon
the feature representation. Thus, $M_{CP}$ can also be used to investigate
the benefits of alternative features representations, such as those
found in different areas of brain visual processing or in the layers
of a deep neural network.

\textcolor{black}{}
\begin{figure}[h]
\noindent \begin{centering}
\textcolor{black}{\includegraphics[width=0.75\columnwidth]{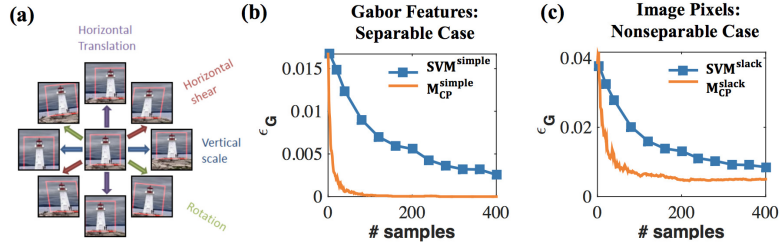}}
\par\end{centering}
\textcolor{black}{\caption{\label{fig:image-based-object-manifolds}\textbf{Image-based object
manifolds}. (a) Basic affine transformation: a template image (middle)
with an object bounding box (pink) surrounded with changes along 4
axes defined by basic affine transformation. (b-c) Generalization
error of the $M_{CP}$ solution for 6-D image-based object manifolds
as a function of the number of training samples (solid orange line)
compared with that of conventional Point SVM (blue squares), obtained
in (b) Gabor representations and (c) Pixel representations. The generalization
error is averaged over all possible choices for labels.}
}
\end{figure}

\section{\textcolor{black}{Discussion }}

\textcolor{black}{We described and analyzed a novel algorithm, $M_{CP}$,
based upon the cutting-plane method for finding the maximum margin
solution for classifying manifolds. We proved the convergence of $M_{CP}$,
and provided bounds on the number of iterations required. Our experiments
with both synthetic data manifolds and image manifolds demonstrate
the efficiency of $M_{CP}$ and superior performance in terms of generalization
error, compared to conventional SVM's using many virtual examples.}

\textcolor{black}{There is a natural extension of $M_{CP}$ to nonlinear
classifiers via the kernel trick, as all our operations involve dot
products between the weight vector $\vec{w}$ and manifold points
$M_{p}(\vec{s})$. At each iteration, the algorithm would solve the
dual version of the $SVM_{T_{k}}$ problem which is readily kernelized.
More theoretical work is needed with infinite-dimensional kernels
when the manifold optimization problem no longer is a QSIP but becomes
a fully (doubly) infinite quadratic programming problem.}

\textcolor{black}{Beyond binary classification, variations of $M_{CP}$
can also be used to solve other machine learning problems including
multi-class classification, ranking, ordinal regression, one-class
learning, etc. We can also use $M_{CP}$ to evaluate the computational
benefits of manifold representations at successive layers of deep
networks in both machine learning and in brain sensory hierarchies.
We anticipate using $M_{CP}$ to help construct novel hierarchical
architectures that can incrementally reformat the manifold representations
through the layers for better overall performance in machine learning
tasks, improving our understanding on how neural architectures can
learn to process high dimensional real-world signal ensembles and
cope with large variability due to continuous modulation of the underlying
physical parameters.}

\subsection*{Acknowledgments}

The work is partially supported by the Gatsby Charitable Foundation,
the Swartz Foundation, the Simons Foundation (SCGB Grant No. 325207),
the NIH, and the Human Frontier Science Program (Project RGP0015/2013).
D. Lee also acknowledges the support of the US National Science Foundation,
Army Research Laboratory, Office of Naval Research, Air Force Office
of Scientific Research, and Department of Transportation.

\textcolor{black}{\bibliographystyle{unsrt}
\bibliography{mmmm}

\begin{thebibliography}{10}

\bibitem{hinton1997modeling}
Geoffrey~E Hinton, Peter Dayan, and Michael Revow.
\newblock Modeling the manifolds of images of handwritten digits.
\newblock {\em Neural Networks, IEEE Transactions on}, 8(1):65--74, 1997.

\bibitem{bengio2013representation}
Yoshua Bengio, Aaron Courville, and Pascal Vincent.
\newblock Representation learning: A review and new perspectives.
\newblock {\em IEEE transactions on pattern analysis and machine intelligence},
  35(8):1798--1828, 2013.

\bibitem{brahma2016deep}
Pratik~Prabhanjan Brahma, Dapeng Wu, and Yiyuan She.
\newblock Why deep learning works: A manifold disentanglement perspective.
\newblock {\em IEEE transactions on neural networks and learning systems},
  27(10):1997--2008, 2016.

\bibitem{riesenhuber1999hierarchical}
Maximilian Riesenhuber and Tomaso Poggio.
\newblock Hierarchical models of object recognition in cortex.
\newblock {\em Nature neuroscience}, 2(11):1019--1025, 1999.

\bibitem{serre2005object}
Thomas Serre, Lior Wolf, and Tomaso Poggio.
\newblock Object recognition with features inspired by visual cortex.
\newblock In {\em Computer Vision and Pattern Recognition, 2005. CVPR 2005.
  IEEE Computer Society Conference on}, volume~2, pages 994--1000. IEEE, 2005.

\bibitem{hung2005fast}
Chou~P Hung, Gabriel Kreiman, Tomaso Poggio, and James~J DiCarlo.
\newblock Fast readout of object identity from macaque inferior temporal
  cortex.
\newblock {\em Science}, 310(5749):863--866, 2005.

\bibitem{dicarlo2007untangling}
James~J DiCarlo and David~D Cox.
\newblock Untangling invariant object recognition.
\newblock {\em Trends in cognitive sciences}, 11(8):333--341, 2007.

\bibitem{pagan2013signals}
Marino Pagan, Luke~S Urban, Margot~P Wohl, and Nicole~C Rust.
\newblock Signals in inferotemporal and perirhinal cortex suggest an untangling
  of visual target information.
\newblock {\em Nature neuroscience}, 16(8):1132--1139, 2013.

\bibitem{tenenbaum1998mapping}
Joshua~B Tenenbaum et~al.
\newblock Mapping a manifold of perceptual observations.
\newblock {\em Advances in neural information processing systems}, pages
  682--688, 1998.

\bibitem{roweis2000nonlinear}
Sam~T Roweis and Lawrence~K Saul.
\newblock Nonlinear dimensionality reduction by locally linear embedding.
\newblock {\em Science}, 290(5500):2323--2326, 2000.

\bibitem{tenenbaum2000global}
Joshua~B Tenenbaum, Vin De~Silva, and John~C Langford.
\newblock A global geometric framework for nonlinear dimensionality reduction.
\newblock {\em science}, 290(5500):2319--2323, 2000.

\bibitem{belkin2003laplacian}
Mikhail Belkin and Partha Niyogi.
\newblock Laplacian eigenmaps for dimensionality reduction and data
  representation.
\newblock {\em Neural computation}, 15(6):1373--1396, 2003.

\bibitem{belkin2006manifold}
Mikhail Belkin, Partha Niyogi, and Vikas Sindhwani.
\newblock Manifold regularization: A geometric framework for learning from
  labeled and unlabeled examples.
\newblock {\em The Journal of Machine Learning Research}, 7:2399--2434, 2006.

\bibitem{canas2012learning}
Guillermo Canas, Tomaso Poggio, and Lorenzo Rosasco.
\newblock Learning manifolds with k-means and k-flats.
\newblock In {\em Advances in Neural Information Processing Systems}, pages
  2465--2473, 2012.

\bibitem{goodfellow2014generative}
Ian Goodfellow, Jean Pouget-Abadie, Mehdi Mirza, Bing Xu, David Warde-Farley,
  Sherjil Ozair, Aaron Courville, and Yoshua Bengio.
\newblock Generative adversarial nets.
\newblock In {\em Advances in neural information processing systems}, pages
  2672--2680, 2014.

\bibitem{anselmi2013unsupervised}
Fabio Anselmi, Joel~Z Leibo, Lorenzo Rosasco, Jim Mutch, Andrea Tacchetti, and
  Tomaso Poggio.
\newblock Unsupervised learning of invariant representations in hierarchical
  architectures.
\newblock {\em arXiv preprint arXiv:1311.4158}, 2013.

\bibitem{simard1994memory}
Patrice~Y Simard, Yann Le~Cun, and John~S Denker.
\newblock Memory-based character recognition using a transformation invariant
  metric.
\newblock In {\em Pattern Recognition, 1994. Vol. 2-Conference B: Computer
  Vision \& Image Processing., Proceedings of the 12th IAPR International.
  Conference on}, volume~2, pages 262--267. IEEE, 1994.

\bibitem{niyogi1998incorporating}
Partha Niyogi, Federico Girosi, and Tomaso Poggio.
\newblock Incorporating prior information in machine learning by creating
  virtual examples.
\newblock {\em Proceedings of the IEEE}, 86(11):2196--2209, 1998.

\bibitem{scholkopf1996incorporating}
Bernhard Sch{\"o}lkopf, Chris Burges, and Vladimir Vapnik.
\newblock Incorporating invariances in support vector learning machines.
\newblock In {\em International Conference on Artificial Neural Networks},
  pages 47--52. Springer, 1996.

\bibitem{krizhevsky2012imagenet}
Alex Krizhevsky, Ilya Sutskever, and Geoffrey~E Hinton.
\newblock Imagenet classification with deep convolutional neural networks.
\newblock In {\em Advances in neural information processing systems}, pages
  1097--1105, 2012.

\bibitem{vapnik1998statistical}
Vladimir Vapnik.
\newblock {\em Statistical learning theory}, volume~1.
\newblock Wiley New York, 1998.

\bibitem{smola1998learning}
Alex~J Smola and Bernhard Sch{\"o}lkopf.
\newblock {\em Learning with kernels}.
\newblock Citeseer, 1998.

\bibitem{lee2001rsvm}
Yuh-Jye Lee and Olvi~L Mangasarian.
\newblock Rsvm: Reduced support vector machines.
\newblock In {\em Proceedings of the 2001 SIAM International Conference on Data
  Mining}, pages 1--17. SIAM, 2001.

\bibitem{wang2005training}
Jigang Wang, Predrag Neskovic, and Leon~N Cooper.
\newblock Training data selection for support vector machines.
\newblock In {\em International Conference on Natural Computation}, pages
  554--564. Springer, 2005.

\bibitem{fang2001solving}
Shu-Cherng Fang, Chih-Jen Lin, and Soon-Yi Wu.
\newblock Solving quadratic semi-infinite programming problems by using relaxed
  cutting-plane scheme.
\newblock {\em Journal of computational and applied mathematics},
  129(1):89--104, 2001.

\bibitem{kortanek1993central}
Kenneth~O Kortanek and Hoon No.
\newblock A central cutting plane algorithm for convex semi-infinite
  programming problems.
\newblock {\em SIAM Journal on Optimization}, 3(4):901--918, 1993.

\bibitem{liu2004new}
Y~Liu, Kok~Lay Teo, and Soon-Yi Wu.
\newblock A new quadratic semi-infinite programming algorithm based on dual
  parametrization.
\newblock {\em Journal of Global Optimization}, 29(4):401--413, 2004.

\bibitem{joachims2006training}
Thorsten Joachims.
\newblock Training linear svms in linear time.
\newblock In {\em Proceedings of the 12th ACM SIGKDD international conference
  on Knowledge discovery and data mining}, pages 217--226. ACM, 2006.

\bibitem{marchand2002cutting}
Hugues Marchand, Alexander Martin, Robert Weismantel, and Laurence Wolsey.
\newblock Cutting planes in integer and mixed integer programming.
\newblock {\em Discrete Applied Mathematics}, 123(1):397--446, 2002.

\bibitem{tsochantaridis2005large}
Ioannis Tsochantaridis, Thorsten Joachims, Thomas Hofmann, and Yasemin Altun.
\newblock Large margin methods for structured and interdependent output
  variables.
\newblock {\em Journal of machine learning research}, 6(Sep):1453--1484, 2005.

\bibitem{deng2009imagenet}
Jia Deng, Wei Dong, Richard Socher, Li-Jia Li, Kai Li, and Li~Fei-Fei.
\newblock Imagenet: A large-scale hierarchical image database.
\newblock In {\em Computer Vision and Pattern Recognition, 2009. CVPR 2009.
  IEEE Conference on}, pages 248--255. IEEE, 2009.

\bibitem{serre2007robust}
Thomas Serre, Lior Wolf, Stanley Bileschi, Maximilian Riesenhuber, and Tomaso
  Poggio.
\newblock Robust object recognition with cortex-like mechanisms.
\newblock {\em Pattern Analysis and Machine Intelligence, IEEE Transactions
  on}, 29(3):411--426, 2007.

\end{thebibliography}
}
\end{document}